\theoremstyle{plain}
\newtheorem{theorem}{Theorem}[section]
\newtheorem{lemma}[theorem]{Lemma}
\theoremstyle{definition}
\newtheorem{definition}[theorem]{Definition}
\theoremstyle{remark}
\newcommand\Tau{\mathcal{T}}
\newcommand{\reals}{\mathbb{R}}
\newcommand{\NNgen}{\mathcal{N}_{n,m}^\sigma}
\def\eqref#1{equation~\ref{#1}}
\def\1{\bm{1}}
\def\rvx{{\mathbf{x}}}
\def\rvz{{\mathbf{z}}}
\def\ervx{{\textnormal{x}}}
\DeclareMathAlphabet{\mathsfit}{\encodingdefault}{\sfdefault}{m}{sl}
\SetMathAlphabet{\mathsfit}{bold}{\encodingdefault}{\sfdefault}{bx}{n}
\def\sL{{\mathbb{L}}}
\newcommand{\E}{\mathbb{E}}
\newcommand{\Var}{\mathrm{Var}}
\title{Vacant Holes for Unsupervised Detection of the Outliers in \\ Compact Latent Representation}
\author[1]{\href{mailto:<m.glazunov@tudelft.nl>?Subject=Your UAI 2023 paper}{Misha~Glazunov}{}}
\author[2]{Apostolis~Zarras}
\affil[1]{%
    Delft University of Technology\\
    the Netherlands
}
\affil[2]{%
    University of Piraeus\\
    Greece
}
\begin{document}
\maketitle

\begin{abstract}
Detection of the outliers is pivotal for any machine learning model deployed and operated in real-world. It is essential for the Deep Neural Networks that were shown to be overconfident with such inputs. Moreover, even deep generative models that allow estimation of the probability density of the input fail in achieving this task. In this work, we concentrate on the specific type of these models: Variational Autoencoders (VAEs). First, we unveil a significant theoretical flaw in the assumption of the classical VAE model. Second, we enforce an accommodating topological property to the image of the deep neural mapping to the latent space: compactness to alleviate the flaw and obtain the means to provably bound the image within the determined limits by squeezing both inliers and outliers together. We enforce compactness using two approaches: $(i)$~Alexandroff extension and  $(ii)$~fixed Lipschitz continuity constant on the mapping of the encoder of the VAEs. Finally and most importantly, we discover that the anomalous inputs predominantly tend to land on the vacant latent holes within the compact space, enabling their successful identification. For that reason, we introduce a specifically devised score for hole detection and evaluate the solution against several baseline benchmarks achieving promising results.
\end{abstract}

\section{Introduction}

Deep Generative Models (DGMs) allow for estimating the probability density of the input. This capability may appear tempting to utilize in the tasks of the detection of the outliers by casting all of the inputs that lie Out-of-Distribution (OoD) with the low density as anomalous. Nevertheless, empirical evidence shows that DGMs may sometimes be overconfident in their density estimation over OoDs~\citep{nalisnick2018deep}. Overconfidence is observed in all types of DGMs, including autoregressive models~\citep{pixelcnn}, normalizing flows~\citep{dinh2017density}, and VAEs~\citep{kingma2013autoencoding, rezende2014stochastic}. This fact may appear especially intriguing, considering the difference in the techniques used for density estimation among these three distinct modeling approaches. However, from the theoretical perspective, there is nothing peculiar in such performance. It can be easily demonstrated that it is possible to learn an invertible reparametrization of the actual density of the data in a way that assigns an arbitrary density to each point in the new representation even in the models with perfect densities and in a low-dimensional setting~\citep{Lan2020PerfectDM}. It means that the outlier detection is infeasible while relying only on the arbitrary learned probability density.



There are several alternative approaches aiming at tackling this issue that can be coarsely classified into one of the following categories: $(i)$~methods that augment the input data by outliers~\citep{hendrycks2018deep, ren2019likelihood}, $(ii)$~ensemble-based methods~\citep{daxberger2019bayesian, glazunov2022do, choi2019waic}, $(iii)$~methods that introduce new scores~\citep{nalisnick2019detecting, Serr2020GENERATIVEM}, $(iv)$~methods based on the model modification~\citep{HernndezLobato2016ImportanceWA, Schirrmeister2020UnderstandingAD}, $(v)$~and methods that involve retraining of the models~\citep{xiao2020likelihood}.

In this work, we refrain from augmenting data with outliers during training since it is not always feasible; we do not retrain the model to check every input as it is time-consuming, and due to the same reason, we do not apply ensemble-based methods. Instead, we utilize a model modification by introducing a new score. Specifically, we address the outlier detection from the perspective of general topology. Namely, we consider the property of compactness of the mapped image in the latent space. This property satisfies the necessary condition for the modeling assumption of a classical VAE from the viewpoint of the Universal Approximation Theorem (UAT)~\citep{cybenko, hornik, pinkus}. First, we implement compactification using the Alexandroff extension of a flat subspace to a hypersphere. Second, we utilize a related topological property: bounded continuity. It equips us with two additional valuable tools. In particular, it lets enforce the Lipschitz-continuity constraints on the mappings used in the model. These constraints, in turn, permit both to establish the compactness of the mapped image and simultaneously control its boundaries in the case of the flat latent space. In addition, it helps to identify if the continuity holes in the latent prior play a significant role in the outlier detection during the ablation study.

Constraining the mapped image of the encoder may at first sound counterintuitive since the common choice of a prior over the latent is used to be the standard normal distribution with the infinite support that explicitly implies that outliers should be placed in some different location, distinctly separated from the inliers. It includes the low-dimensional cases where such inputs are placed in the tails far from the mode and the high-dimensional cases where the outliers are located outside the typical set. However, as we already indicated, there is no guarantee for such behavior even in perfect density models since any density function can be manipulated by an arbitrary choice of representation. Since there is no control over the mapped compact in the latent space, the choice of the bounds of the learned factors of variations of the VAE is basically arbitrary. In some situations, it can be the case that the outliers are indeed placed far from the inliers, which gives an excellent separation based only on the density values; however, in other situations, the outliers and inliers may overlap, which in some cases results in the overconfidence of the model. Hence the purposeful control over the compactness of the mapped image enforces the model to bind the learned factors of variations for \emph{any} input within the predefined limits. If these limits are chosen in such a way that enforces the model to squeeze all of its inputs in the properly bounded space, then the model would have no other choice than to map the outliers somewhere \emph{within} the same space that is used for the inliers in the latent representation. Experimental evidence shows that when the model is confronted with such tight condensing, it tends to place the outliers into the vacant latent continuity holes allowing their successful detection.

In summary, we make the following main contributions:

\begin{itemize}
\item We reveal the persistent theoretical flaw in the modeling assumption of VAEs.
\item We mitigate this shortcoming by enforcing controlled compactness of the latent space.
\item By bounding the image of the encoder, we discover that the outliers tend to gravitate toward the vacant latent holes and devise an appropriate score for their detection.
\item We empirically evaluate the suggested approach based on several datasets.
\end{itemize}

\section{Background}

\subsection{Notation}

We use nonbold $x$'s to denote elements of general topological spaces, including the ones equipped with the appropriate metric. In the case of the normed vector spaces and random vectors within such spaces, we adhere to traditional usage in the literature, namely $\rvx$. When it comes to the particular elements comprising the random vector, we utilize $\ervx$. The spaces are denoted as a pair $(\mathcal{X}, \Tau)$ for topological spaces with the corresponding topology $\Tau$. In the specific case of metric spaces, we indicate the appropriate metric $d$ that induces topology: $(\mathcal{X}, d_{\mathcal{X}})$.

\subsection{VAEs}

VAE represents a DGM that allows to get an approximate value of the density of the input $\rvx$. It is based on the optimization of the evidence lower bound (ELBO), that provides joint optimization w.r.t variational parameters $\boldsymbol{\phi}$ of the encoder responsible for variational approximation of the posterior $q_{\boldsymbol{\phi}}$ over the latent variable $\rvz$, and the generative parameters $\boldsymbol{\theta}$ of the decoder responsible for the parameterization of the likelihood $p_{\boldsymbol{\theta}}(\rvx|\rvz)$:
%
\begin{equation}
\label{eq:elbo_z}
\mathcal{L_{\boldsymbol{\theta}, \boldsymbol{\phi}}}(\rvx) = \E_{q_{\boldsymbol{\phi}}(\rvz|\rvx)} \Big[\log p_{\boldsymbol{\theta}}(\rvx|\rvz) - \log \frac{q_{\boldsymbol{\phi}}(\rvz|\rvx)}{p_{\boldsymbol{\theta}}(\rvz)}\Big]
\end{equation}

This equation involves a data likelihood term (used for generative purposes) and a regularization term (the KL divergence between the variational family $q_{\phi}(\rvz|\rvx)$ and the prior distribution over the latent variables).

The final estimation of the marginal likelihood is done using importance sampling. Backpropagation through the random variable $\rvz$ is performed utilizing the standard reparameterization trick~\citep{kingma2013autoencoding}.

\subsection{Compactness}

A topological space is compact or, equivalently, possesses a compactness property if every of its open cover has a finite subcover. In the case of the Eucledian spaces, the following specific result exists.

\begin{theorem}[Heine-Borel]
\label{thm:heineborel}
Let $K \subset \reals^n$ then $K$ is compact if and only if $K$ is closed and bounded.
\end{theorem}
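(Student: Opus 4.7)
The plan is to prove both implications separately, using standard arguments from point-set topology that work smoothly in $\reals^n$ with its Euclidean metric.

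For the forward direction (compact $\Rightarrow$ closed and bounded), I would first show boundedness by considering the open cover $\{B(0,m) : m \in \sN\}$ of $K$; compactness yields a finite subcover, and $K$ is contained in the largest ball, hence bounded. For closedness I would show the complement $\reals^n \setminus K$ is open: pick $x \notin K$ and for every $y \in K$ use the Hausdorff property of $\reals^n$ to choose disjoint open balls $U_y \ni y$ and $V_y \ni x$ with radius $\tfrac{1}{2}\|x-y\|$. The $U_y$ cover $K$, so finitely many $U_{y_1},\dots,U_{y_k}$ suffice, and then $\bigcap_{i=1}^k V_{y_i}$ is an open neighborhood of $x$ disjoint from $K$.

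For the reverse direction (closed and bounded $\Rightarrow$ compact), the strategy splits into two steps. Step one: prove the closed cube $Q_0 = [-M,M]^n$ is compact for any $M>0$. The standard approach is a bisection argument: suppose toward contradiction that some open cover $\{U_\alpha\}$ of $Q_0$ admits no finite subcover. Bisect $Q_0$ along each coordinate into $2^n$ congruent sub-cubes; at least one sub-cube $Q_1$ still fails to have a finite subcover drawn from $\{U_\alpha\}$. Iterate to obtain a nested sequence $Q_0 \supset Q_1 \supset Q_2 \supset \cdots$ with diameters tending to zero. By completeness of $\reals^n$, the intersection is a single point $p$, and $p$ lies in some $U_{\alpha_0}$ which contains an open ball around $p$; for $k$ large enough $Q_k$ fits inside that ball and is covered by the single set $U_{\alpha_0}$, contradicting the choice of $Q_k$. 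Step two: since $K$ is bounded it sits inside some such $Q_0$, and since $K$ is closed in $\reals^n$ its complement $\reals^n \setminus K$ is open. Given any open cover $\{U_\alpha\}$ of $K$, adjoining $\reals^n \setminus K$ produces an open cover of $Q_0$, from which compactness of $Q_0$ extracts a finite subcover. Discarding the adjoined set leaves a finite subcover of $K$.

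The main obstacle is the bisection step establishing compactness of $Q_0$, since it is the only place where the completeness of $\reals^n$ and the specific geometry of Euclidean space enter in an essential way; the remaining arguments are purely formal consequences of being Hausdorff or of the hereditary behavior of compactness under closed subsets. I would therefore present the bisection carefully, and treat the other pieces as short routine verifications.
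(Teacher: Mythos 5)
Your proof is correct: the forward direction (boundedness via the nested-ball cover, closedness via the standard Hausdorff separation argument with half-distance balls) and the reverse direction (bisection of the cube $[-M,M]^n$ using completeness, then passing to the closed subset $K$ by adjoining $\reals^n \setminus K$ to the cover) are the classical textbook arguments and contain no gaps. Note, however, that the paper itself states Heine--Borel only as background (it is used later in Theorem 4.2 and Appendix B) and offers no proof of it at all, so there is nothing in the paper to compare your route against; your write-up simply supplies the standard proof the paper takes for granted.
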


Compactification is the process of turning a topological space into a compact one.

\begin{definition}
\label{def:compactification}
 Let $(\mathcal{X}, \Tau)$  be a topological space and let $(\mathcal{X}^*, \Tau^*)$ be a compact topological space s.t. $\mathcal{X}$ is homeomorphic to a dense subspace of $\mathcal{X}^*$. Then $(\mathcal{X}^*, \Tau^*)$ is called a compactification of $(\mathcal{X}, \Tau)$. Thus, a compact space $(\mathcal{X}^*, \Tau^*)$ is a compactification of a space $(\mathcal{X}, \Tau)$ if and only if there exists a mapping $f$ of $\mathcal{X}$ into $\mathcal{X}^*$ s.t. $f$ is homeomorphism of $\mathcal{X}$ onto the subspace $f(\mathcal{X})$ of $\mathcal{X}^*$ and $f(\mathcal{X})$ is dense in $\mathcal{X}^*$.
\end{definition}

An illustrative example of a frequently used compactification is an extension of $\reals$ to $\reals \cup \{-\infty, +\infty\}$.

Besides, there is a specific type of compactification by adjoining only one point: the Alexandroff extension.

\begin{definition}
\label{def:alexandroff}
 Let $(\mathcal{X}, \Tau)$ be a topological space and let $\infty$ be an object not belonging to $\mathcal{X}$.
Let $\mathcal{X}^* = \mathcal{X} \cup \infty$ and let a topology $\Tau^*$ on $\mathcal{X}^*$ defined as follows: $ \mathcal{\Tau}^* = \mathcal{\Tau} \cup \{ V \subset \mathcal{X}^* : \infty \in V \textrm{and} \  \mathcal{X} \setminus V \ \textrm{is closed and compact in} \  \mathcal{X} \}$. Then $(\mathcal{X}^*, \Tau^*)$ is the Alexandroff extension of $(\mathcal{X}, \Tau)$.



\end{definition}

An intuitive example of the Alexandroff extension is the inverse stereographic projection from the Euclidean plane to the sphere with the addition of a point at infinity.

\subsection{Lipschitz continuity}

\begin{definition}
\label{def:Lip}
A map $f: \mathcal{X} \to \mathcal{Y}$, where $(\mathcal{X}, d_{\mathcal{X}})$ and $(\mathcal{Y}, d_{\mathcal{Y}})$ are metric spaces with the corresponding metrics $d_{\mathcal{X}}$ and $d_{\mathcal{Y}}$, is called Lipschitz continuous if for any $x_1,x_2\in \mathcal{X}$, there exists a constant $M \in \mathbb{R^{+}}$ such that:
\begin{equation}
d_{\mathcal{Y}}(f(x_1), f(x_2)) \leq M d_{\mathcal{X}}(x_1, x_2)    
\end{equation}

\end{definition}

$M$ is called a Lipschitz constant. In this work, we refer to the Lipschitz constant as the smallest possible $M$. A mapping with such a constant is called an $M$-Lipschitz map. If not explicitly indicated otherwise, we let $\mathcal{X} = \reals^n$ and $\mathcal{Y} = \reals^m$.

Recall that widely used activation functions such as sigmoid, tanh, and ReLU~\citep{jarrett} are already generally scaled to be 1-Lipschitz. Hence, due to the composition property of the Lipschitz mappings, the first intuitive attempt to enforce the desirable Lipschitz property on the mapping would be to constrain the operator norm of the weights of each layer of the Deep Neural Network (DNN)~\citep{yoshida, cisse}. However, it was proven that such an approach could not approximate even a simple absolute value function~\citep{huster}. To tackle the issue, \cite{anil} observed the critical component that influences the expressive power of any DNN, namely, the gradient-preserving property of its transformations. Therefore, they introduced the appropriate linear transformations and the 1-Lipschitz activation function, GroupSort, both of which are gradient preserving. They provably allow setting a Lipschitz constant on a DNN mapping. Moreover, DNNs utilizing them represent universal approximators of any Lipschitz mapping.

\subsubsection{Latent Holes}\label{latent_holes_duality}
\cite{falorsi2018} introduced a score for detecting continuity holes in the latent space based on the ratio of the distances between two nearly located points in the input space and the distances of their corresponding latent codes: 
\begin{equation}
\mathcal{F}_{Lip}= d_{\mathcal{Y}}(f(x_1), f(x_2)) / d_{\mathcal{X}}(x_1, x_2)
\end{equation}

\cite{xu2020} discovered that there exist vacant regions of low density in the aggregated posterior where prior assigns a relatively high density. They suggested detecting these regions by estimating the negative log-likelihood of the manipulated reference latent codes under the aggregated posterior:
\begin{equation}
\mathcal{F}_{Agg} = - \log p(\mathbf{z} \pm \mathbf{\epsilon})
\end{equation}

where $\epsilon$ represents a magnitude of manipulation. It was demonstrated by~\cite{ruizhe2021} that both of these scores are connected despite the different motivation meaning that if the hole is detected by the score $\mathcal{F}_{Agg}$ then it will be also detected by $\mathcal{F}_{Lip}$.

\subsection{Universal Approximation Theorem}

The theoretical underpinning of DNNs is rooted in the results obtained in the approximation theory that is commonly referred to as the universal approximation theorem ~\citep{cybenko, hornik, pinkus}.

\begin{theorem}
\label{thm:uat}
Let $C(\mathcal{X}, \mathcal{Y})$ denote the set of all continuous mappings from $\mathcal{X}$ to $\mathcal{Y}$. Let $\sigma \in C(\reals, \reals)$ represent an element-wise activation function. Then let $\NNgen$ represent the class of feedforward neural networks with activation function $\sigma$, with $n$ neurons in the input layer, $m$ neurons in the output layer, and one hidden layer with an arbitrary number of neurons. Let $K \subseteq \reals^n$ be \textbf{compact}. Then $\sigma$ is nonpolynomial if and only if $\NNgen$ is dense in $C(K, \mathcal{Y})$.
\end{theorem}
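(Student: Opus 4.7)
I would prove the two directions separately, with the reverse (``nonpolynomial implies density'') being the substantive half.

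\textbf{Forward direction (density implies nonpolynomial).} Suppose $\sigma$ is a polynomial of degree $d$. Every coordinate of a network in $\NNgen$ has the form $\sum_{i=1}^k c_i \sigma(\langle w_i, x\rangle + b_i)$, which is itself a polynomial in $x \in \reals^n$ of total degree at most $d$, regardless of the width $k$. The space of such polynomials is a finite-dimensional subspace of $C(K, \reals)$ (for any $K$ with nonempty interior), and a finite-dimensional subspace cannot be dense in the infinite-dimensional space $C(K, \reals)$. This rules out density of $\NNgen$ in $C(K, \mathcal{Y})$.

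\textbf{Reverse direction.} I would first reduce to $m = 1$ by treating each output coordinate separately. Then the task is to show that, for nonpolynomial $\sigma$, the span of functions $x \mapsto \sigma(\langle w, x\rangle + b)$ with $w \in \reals^n$, $b \in \reals$ is uniformly dense on $K$.

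\emph{(a) Reduction to one variable via ridge functions.} I would show that it suffices to establish density of univariate shallow networks $\sum_i c_i \sigma(w_i t + b_i)$ in $C([a,b])$ for every compact interval. The passage back to $\reals^n$ uses a classical ridge-function density result (of Vostrecov--Kreines type): the linear span of $\{x \mapsto g(\langle w, x\rangle) : w \in \reals^n, g \in C(\reals)\}$ is dense in $C(K)$, which follows from a Stone--Weierstrass argument on the algebra generated by characters $\{e^{i\langle w, x\rangle}\}$ together with approximation of each $g$ in the univariate class.

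\emph{(b) Univariate density for smooth $\sigma$.} If $\sigma \in C^\infty$ is nonpolynomial, I would argue there exists $b_0 \in \reals$ with $\sigma^{(k)}(b_0) \neq 0$ for every $k \geq 0$; otherwise the closed sets $\{b : \sigma^{(k)}(b) = 0\}$ cover $\reals$, and a Baire category argument forces $\sigma$ to agree with a polynomial on an open interval, which via standard $C^\infty$ propagation makes $\sigma$ polynomial globally, a contradiction. Differentiating $w \mapsto \sigma(wt + b_0)$ at $w = 0$ yields $\sigma^{(k)}(b_0) t^k$; these derivatives are realized as uniform limits of divided differences of shallow $\sigma$-networks on $[a,b]$. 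Hence every monomial lies in the closure of the univariate $\sigma$-network class, and by Weierstrass so does every element of $C([a,b])$.

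\emph{(c) From smooth to general continuous $\sigma$.} For merely continuous nonpolynomial $\sigma$, I would mollify by convolving with a smooth compactly supported bump $\eta$ to obtain $\sigma_\eta \in C^\infty$, which remains nonpolynomial for appropriately chosen $\eta$. Since $\sigma_\eta(wt + b) = \int \eta(s)\, \sigma(wt + b - s)\, ds$ is a uniform limit of Riemann sums whose terms are shifted $\sigma$-networks, every $\sigma_\eta$-network lies in the closure of $\NNgen$. Applying part (b) to $\sigma_\eta$ then transfers density back to $\sigma$.

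\textbf{Main obstacle.} Step (c) is the delicate one: preserving nonpolynomiality under mollification and verifying that the mollified networks genuinely lie in the uniform closure of $\NNgen$ on $K$ (rather than a larger class) is the technical heart of the Leshno--Lin--Pinkus--Schocken argument. Compactness of $K$ is essential precisely here, since every approximation step is carried out in the sup-norm on $K$ and relies on the uniform continuity such compactness provides.
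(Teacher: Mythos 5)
The paper never proves this statement: Theorem~\ref{thm:uat} is quoted as background and attributed to the literature \citep{cybenko, hornik, pinkus}, so there is no in-paper argument to compare against. Your outline is, in substance, a correct reconstruction of the standard Leshno--Lin--Pinkus--Schocken proof as presented by \citet{pinkus}: the finite-dimensionality argument for the ``only if'' direction, the reduction to $m=1$ and to univariate ridge functions, the Baire-category selection of a point $b_0$ with $\sigma^{(k)}(b_0)\neq 0$ for all $k$ so that divided differences in $w$ recover the monomials $t^k$, and the mollification step transferring the smooth case to merely continuous $\sigma$. You also correctly locate the technical heart (showing $\sigma_\eta$ stays nonpolynomial for a suitable bump and that the Riemann sums converge uniformly on the compact range of $\langle w,x\rangle$, which is where compactness of $K$ enters), though you leave that part as a sketch rather than carrying it out. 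Two small caveats are worth flagging: first, your forward direction needs the parenthetical you already half-acknowledge --- for a fixed degenerate compact $K$ (e.g., a finite set) a polynomial activation can still be dense in $C(K,\mathcal{Y})$, so the ``if and only if'' is properly a statement quantified over all compact $K$ (or over $K$ with nonempty interior), a looseness the paper's own statement inherits from compressing the original theorem; second, in step (a) you should note explicitly that approximating $g(\langle w,x\rangle)$ by $\sum_i c_i\sigma(w_i\langle w,x\rangle+b_i)$ stays inside $\NNgen$ because $w_i\langle w,x\rangle=\langle w_i w,x\rangle$ is again an affine functional of $x$. With those points made explicit, your route is the standard one and is sound; it is simply more than the paper itself attempts, since the paper imports the theorem wholesale.
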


The activation functions currently used in DNNs are non-polynomial, so they fulfill the main requirement of the theorem. However, we deliberately emphasize that the results of the Universal Approximation Theorem (UAT) apply only in the cases when the input of the neural network is a compact set that is often overlooked.

\section{Related Work}
\label{sec:relatedwork}

\medskip \textbf{New Scores-Based Methods.} \cite{nalisnick2019detecting} conjecture that considering the high dimensionality of inputs, the over-confidence of DGMs may be because in-distribution images lie in the typical set as opposed to the tested OoDs that concentrate in the high-density region. They introduce the test for typicality that treats all input sequences as inliers if their entropy is close to the model's entropy.

Since the likelihood of generative models is biased by the complexity of the inputs, \cite{Serr2020GENERATIVEM} propose to offset this bias by a factor that measures the input complexity and use the length of lossless compression of the image as the complexity factor, which is used to determine OoD. However, they do not evaluate their method on VAEs.

\medskip \textbf{Ensemble-Based Methods.} \cite{choi2019waic} use an ensemble of independently trained DGMs that allow to get the density value and score them against the WAIC.
 
\medskip \textbf{Bayesian DGMs}. Although the BDGMs represent a single model, the Bayesian inference over model parameters allows building ensembles on the fly. The theoretical justification for the Bayesian VAE has been first laid out by~\cite{kingma2013autoencoding}. Several works are dedicated to OoD detection using Bayesian inference~\citep{daxberger2019bayesian, glazunov2022do}. They introduced new scores, such as the disagreement score and entropy. Both are based on the discrepancy between the models' estimations within the ensemble that achieved state-of-the-art results.




\medskip \textbf{Lipschitz Continuity Methods.} Several works utilize the Lipschitz continuity to improve the robustness of discriminative models against adversarial examples~\citep{Andriushchenko, tsuzuku, yang2020}. \cite{barrett2021} apply the gradient-preserving transformations from~\cite{anil} in a similar to our approach manner. However, their main focus is to use Lipschitz mappings for certifiable robustness against adversarial examples.

\section{Methodology}

\subsection{Compactness of the Learned Latent Representation}

A usual assumption for VAE models is that the prior follows the standard normal distribution: $p(\mathbf{z}) =  \mathcal{N}(\mathbf{z}; \mathbf{0},\mathbf{I})$. It is a meaningful choice from the perspective of the generative process since it provides a clear and simple way of sampling. Moreover, it is a natural candidate for the ELBO objective's regularization term in learning a Gaussian posterior per each input of VAE. However, it additionally implies an infinite support of the latent prior. We show that such an assumption contradicts the UAT (\cref{thm:uat}).

\begin{lemma}
\label{lem:compactimage}
Let $f:\mathcal{X} \to \mathcal{Y}$ be a continuous mapping from a topological space $\mathcal{X}$ to a topological space $\mathcal{Y}$. If $\mathcal{X}$ is compact, then its image $f[\mathcal{X}]$ is also compact (for proof, see Appendix A).
\end{lemma}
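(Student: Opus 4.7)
The plan is the standard point-set topology argument: work directly from the open-cover definition of compactness recalled just before \cref{thm:heineborel}, and exploit continuity of $f$ through the fact that preimages of open sets are open. Concretely, I would start by fixing an arbitrary open cover $\{V_\alpha\}_{\alpha \in A}$ of $f[\mathcal{X}]$ by open subsets of $\mathcal{Y}$ (equivalently, by opens of the subspace $f[\mathcal{X}]$; the two formulations are interchangeable by definition of the subspace topology, so I would state the argument for opens of $\mathcal{Y}$ and note the subspace version follows immediately).

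Next I would pull the cover back to $\mathcal{X}$. By continuity of $f$, each preimage $f^{-1}(V_\alpha)$ is open in $\mathcal{X}$, and the collection $\{f^{-1}(V_\alpha)\}_{\alpha \in A}$ covers $\mathcal{X}$: for any $x \in \mathcal{X}$, the point $f(x)$ lies in $f[\mathcal{X}] \subseteq \bigcup_\alpha V_\alpha$, so $f(x) \in V_{\alpha}$ for some index $\alpha$, which means $x \in f^{-1}(V_\alpha)$. Then I would invoke compactness of $\mathcal{X}$ to extract a finite subcover $\{f^{-1}(V_{\alpha_1}), \ldots, f^{-1}(V_{\alpha_k})\}$ of $\mathcal{X}$ and check that $\{V_{\alpha_1}, \ldots, V_{\alpha_k}\}$ covers $f[\mathcal{X}]$: any $y \in f[\mathcal{X}]$ is of the form $y = f(x)$ for some $x \in \mathcal{X}$, and this $x$ sits in some $f^{-1}(V_{\alpha_i})$, so $y = f(x) \in V_{\alpha_i}$. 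This exhibits the required finite subcover.

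There is essentially no obstacle here, since the lemma is a textbook result in general topology; the only small care needed is to be explicit that the open cover is taken in the codomain (or the subspace) rather than restricted to the image, and to keep straight that continuity is used exactly once, to ensure the pulled-back sets are open. No metric, Hausdorff, or second-countability hypothesis on $\mathcal{X}$ or $\mathcal{Y}$ is required, which matches the statement of the lemma and is the reason the argument stays short.
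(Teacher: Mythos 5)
Your argument is correct and is essentially identical to the proof given in Appendix A of the paper: both pull an arbitrary open cover of $f[\mathcal{X}]$ back through $f$, use continuity to see the preimages are open, extract a finite subcover of $\mathcal{X}$ by compactness, and push it forward to cover the image. No changes are needed.
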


\textbf{Hence, by combining both \cref{thm:uat} and \cref{lem:compactimage} it follows that the image of any DNN trained on a compact set is also compact. This conclusion contradicts the infinite support assumption of the standard normal prior in the case of VAEs. Any DNN used as an encoder will map all inputs to the compact subset of the latent space.}

In the case of in-distribution inputs, this conclusion may be considered subtle since all such inputs should be assigned the appropriate density under the model learned during the DNN training. However, it plays a significant role as soon as the model starts dealing with the OoD inputs. These are the different inputs that the model has not seen before and has not been able to generalize during training. Therefore, as it was demonstrated in~\cite{Lan2020PerfectDM} the model is not constrained in putting those inputs anywhere within the whole available support or, more precisely, within the learned image of the encoder mapping. The properties of the compactness of the latent space become of great importance. One of the essential questions concerns the locations where the model tends to map the OoD inputs within the image compact space. As it was demonstrated by~\citeauthor{nalisnick2018deep}, the DGMs and VAEs, in particular, tend to be overconfident with OoD inputs. There were several attempts to explain this type of behaviour~\cite{nalisnick2019detecting, kirichenko2020normalizing}, but none addressed the issue of compactness.

In this paper, we deliberately enforce the latent space's compactness. The reason for that is twofold. First, it should alleviate the contradiction above in the modeling assumption of the VAE by providing a principled way to set the compactness of the image of the learned mapping. Furthermore, the input support for the decoder also gets a compact space during training which is again in line with UAT. Second, it allows us to conduct experiments with the outliers' detection in the controlled environment with the desirable compactness properties so that all the holes will be located within the predefined boundaries.

In principle, this approach can be implemented utilizing the following two separate methods: $(i)$~by Alexandroff extension and $(ii)$~by setting a predefined Lipschitz constant of the encoder. The first method implies a change of the intrinsic curvature of the latent space by switching from a Euclidean to a non-Euclidean manifold. On the other hand, the second method allows keeping a flat latent space by only enforcing specific bounds on a mapped compact.

\subsection{Compactification of the Latent Space}

\subsubsection{Compactification of the Latent Space to the Hypersphere}

The Alexandroff extension (\cref{def:alexandroff}) of $\reals^n$ can be done by adjoining a single point at infinity, turning the flat Euclidean space into a hypersphere $\mathcal{S}^n$ embedded into $\reals^{n+1}$.

\begin{lemma}
\label{lem:sphereiscompact}
Let $\mathcal{S}^{n} := \{ \mathbf{x} \in \reals^{n+1} : ||\mathbf{x}|| = 1 \}$ be a hypersphere with radius $r=1$ centered at $\mathbf{0}$ and embedded in $\reals^{n+1}$ then $\mathcal{S}^{n}$ is compact (for proof, see Appendix B).
\end{lemma}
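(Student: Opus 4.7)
The plan is to invoke the Heine--Borel theorem (\cref{thm:heineborel}) stated earlier in the excerpt: since $\mathcal{S}^n$ sits inside the Euclidean space $\reals^{n+1}$, it suffices to verify that $\mathcal{S}^n$ is both closed and bounded as a subset of $\reals^{n+1}$.

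First I would dispose of boundedness, which is essentially immediate from the defining condition. Every point $\mathbf{x} \in \mathcal{S}^n$ satisfies $\|\mathbf{x}\| = 1$, so $\mathcal{S}^n$ is contained in the closed ball $\overline{B}(\mathbf{0}, 1) \subset \reals^{n+1}$, and hence bounded (for instance, the diameter is at most $2$).

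For closedness, the cleanest route is to exhibit $\mathcal{S}^n$ as a preimage of a closed set under a continuous map. Concretely, define $\varphi : \reals^{n+1} \to \reals$ by $\varphi(\mathbf{x}) = \|\mathbf{x}\|$ (or, to avoid even mentioning the square root, $\varphi(\mathbf{x}) = \|\mathbf{x}\|^2$). The Euclidean norm is continuous on $\reals^{n+1}$, and the singleton $\{1\} \subset \reals$ is closed, so $\mathcal{S}^n = \varphi^{-1}(\{1\})$ is closed in $\reals^{n+1}$. Combining closedness and boundedness with \cref{thm:heineborel} yields compactness of $\mathcal{S}^n$.

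There is essentially no obstacle here; the only substantive choice is whether to argue closedness via preimages of a closed set under a continuous function (as above) or via a sequential argument showing that the limit of any convergent sequence $(\mathbf{x}_k) \subset \mathcal{S}^n$ in $\reals^{n+1}$ also has unit norm by continuity of $\|\cdot\|$. Both arguments are routine, so I would simply present the preimage version for brevity.
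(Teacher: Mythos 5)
Your proposal is correct and follows essentially the same route as the paper's Appendix~B proof: boundedness is immediate from the defining condition, closedness follows because $\mathcal{S}^n$ is the preimage of the closed set $\{1\}$ under the continuous norm (or squared-norm) map, and Heine--Borel (\cref{thm:heineborel}) then gives compactness. The only difference is cosmetic: your write-up is slightly more careful than the paper's, which loosely writes $\|\rvx\| = \sum \ervx^2$ while meaning the same preimage argument.
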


The appropriate type of distribution that can be utilized on the hyperspherical surface is the von Mises-Fisher distribution which is parameterized by mean $\mu$ and concentration $\kappa$. If the concentration parameter $\kappa$ is greater than zero, then the distribution has properties similar to normal; however, when $\kappa=0$, then it is a uniform distribution. It allows choosing the uniform prior and calculating the corresponding KL-divergence term for the regularization within the latent space. We utilize the same algorithm introduced by~\cite{hvae} for the sampling and reparametrization trick. They named it a Hyperspherical VAE (HVAE).

\subsubsection{Enforcing Compactness by Setting a Lipschitz Constant on the Encoder Mapping}

Although the Alexandroff extension of the Euclidean space to the \textbf{hypersphere} is theoretically appealing, it \textbf{has an issue with the surface area collapse}, which makes it infeasible to use \textbf{in high-dimensional settings} (see Appendix C). To alleviate these issues, we implement our own method of ensuring the compactness of the latent codes. This method is beneficial since it keeps the flat Euclidean space for the latent representation and provides the necessary means to control the boundaries of the resulting compact.

\begin{theorem}
\label{thm:inDistsSuppVsOutDistsSupp}
Image of an M-Lipschitz mapping $f:\mathcal{X} \to \mathcal{Z}$ from a compact $K \subseteq \mathcal{X}$ with $x,y \in K$: $\left\Vert f(x) - f(y)\right\Vert \leq M\left\Vert x-y \right\Vert$ is bounded by both a corresponding Lipschitz constant $M$ \textbf{and} by a radius $R$ of a closed ball in the input support.
\end{theorem}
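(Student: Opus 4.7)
The plan is to leverage the Heine–Borel characterization of compact subsets of Euclidean space (\cref{thm:heineborel}) to pin down the input side, and then push the diameter bound through the Lipschitz inequality to pin down the image side. So the statement really is a quantitative bound on $\operatorname{diam}(f[K])$ in terms of $M$ and $R$.

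First I would apply \cref{thm:heineborel}: since $K \subseteq \reals^n$ is compact, it is closed and bounded, so there exists some $c \in \reals^n$ and some $R < \infty$ such that $K \subseteq \overline{B}(c,R) := \{x \in \reals^n : \|x - c\| \leq R\}$. In fact one may take $R$ to be the (finite) radius of the smallest enclosing closed ball around $K$. This gives the right-hand ``$R$'' in the claim a precise meaning.

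Next I would turn this into a diameter estimate on $K$. For any $x, y \in K$ we have $\|x - y\| \leq \|x - c\| + \|c - y\| \leq 2R$, so $\operatorname{diam}(K) \leq 2R$. Now I would fix an arbitrary anchor point $x_0 \in K$ and apply the $M$-Lipschitz hypothesis: for every $x \in K$,
\begin{equation}
\|f(x) - f(x_0)\| \;\leq\; M\,\|x - x_0\| \;\leq\; 2MR.
\end{equation}
Therefore $f[K] \subseteq \overline{B}(f(x_0),\, 2MR)$, which is precisely the claim that the image is bounded by both $M$ and $R$. (As a corollary one also gets $\operatorname{diam}(f[K]) \leq M \cdot \operatorname{diam}(K) \leq 2MR$, so the constants $M$ and $R$ enter multiplicatively, in line with the intuition that either shrinking the input support or lowering the Lipschitz constant yields a tighter latent compact.)

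I do not expect a real obstacle here: the argument is essentially a one-line consequence of Heine–Borel plus the defining inequality of Lipschitz continuity. The only point that deserves care is bookkeeping about \emph{which} ball one picks — the center $c$ of the enclosing ball of $K$ need not lie in $K$, so one must anchor the image bound at $f(x_0)$ for some $x_0 \in K$ rather than at $f(c)$, since $f$ is only defined (and only Lipschitz) on $K$. Once this is handled, the bound $f[K] \subseteq \overline{B}(f(x_0), 2MR)$ is immediate and establishes the theorem.
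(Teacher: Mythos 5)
Your proof is correct and follows essentially the same route as the paper: Heine--Borel gives boundedness of the compact input, and the Lipschitz inequality transfers that bound to the image, yielding a bound of the form $MR$ (up to a constant factor). If anything, your version is slightly more careful than the paper's, which writes $\left\Vert x-y\right\Vert \leq R$ directly from containment in a ball of radius $R$ (where the diameter bound is really $2R$) and does not bother to anchor the image ball at a point $f(x_0)$ with $x_0 \in K$.
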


\begin{proof} 
By the Heine-Borel (\cref{thm:heineborel}), a compact $K \subset \reals^n$ is closed and bounded, meaning that the set is contained in some closed ball with a finite radius $R$. Hence, for any $x,y \in K$:
\begin{equation}
\left\Vert x - y\right\Vert \leq R
\end{equation}

Therefore, by combining the two inequalities above, we get:
\begin{equation}
\left\Vert f(x) - f(y)\right\Vert \leq MR
\end{equation}

so the mapping $f$ is bounded by the constant $MR$.
\end{proof}

Note that it is necessary to consider three components simultaneously to set a bound on the DNN output: bounds of the input compact, a norm being used, and, finally, an M-Lipschitz constant. In this work, we normalize the input support to the following compact vector space: $[0,1]^n$. It conveniently allows constraining $R \leq 1$ by applying an $L_{\infty}$-norm.

Moreover, to preserve both the generative functionality and the comparable log-likelihood values with the non-compact latent prior, it is important to consider the properties of the standard normal prior distribution. In the case of the low-dimensional setting, it is natural to bind the resulting compact with some standard deviation multiplier depending on the condensing tightness one wants to obtain. However, in the high-dimensional setting, the typical set should be considered. For that reason, the actual values for the Lipschitz constant of the encoder should be based on the dimensionality of the latent space. Namely, an upper bound on the mapped image should depend on the location of the typical set of the prior and its width. Recall that the center of the typical set of a centered normal distribution is located at the distance of $\sigma\sqrt{m}$ from the mode. In our experiments, we set the width equal to two standard deviations, and we choose the closest whole number: 
\begin{equation}
M \coloneqq \lfloor\sigma\sqrt{m} + 2\sigma\rceil    
\end{equation}

where $m$ is the dimensionality of the latent representation and $\sigma = 1$ for the standard deviation of the prior.

In our work, we ensure the Lipschitz constant of the mapping utilizing the GroupSort activation function together with a projection of the weights of each layer on $L_{\infty}$-ball during the forward-pass of the DNN. The constant is set layerwise in the following way: for a DNN with $K$ number of layers in order to guarantee the $M$-lipschitz constant of the entire network mapping, we enforce the $M^{\frac{1}{K}}$ constant per each of its layer. It relies on the fact that the finite composition of Lipschitz functions is also Lipschitz with the product of the corresponding constants used in composition:
\begin{equation}
M= \prod_{n=1}^{K}M^{\frac{1}{K}}    
\end{equation}

The main building blocks are both 1-Lipschitz non-linearity and 1-Lipschitz linear mapping per each layer. The appropriate scaling of the results makes them equal to $M^{\frac{1}{K}}$. For the complete algorithm, see Appendix E.

\begin{figure*}[ht]
\centering 
\includegraphics[width=0.9\linewidth]{./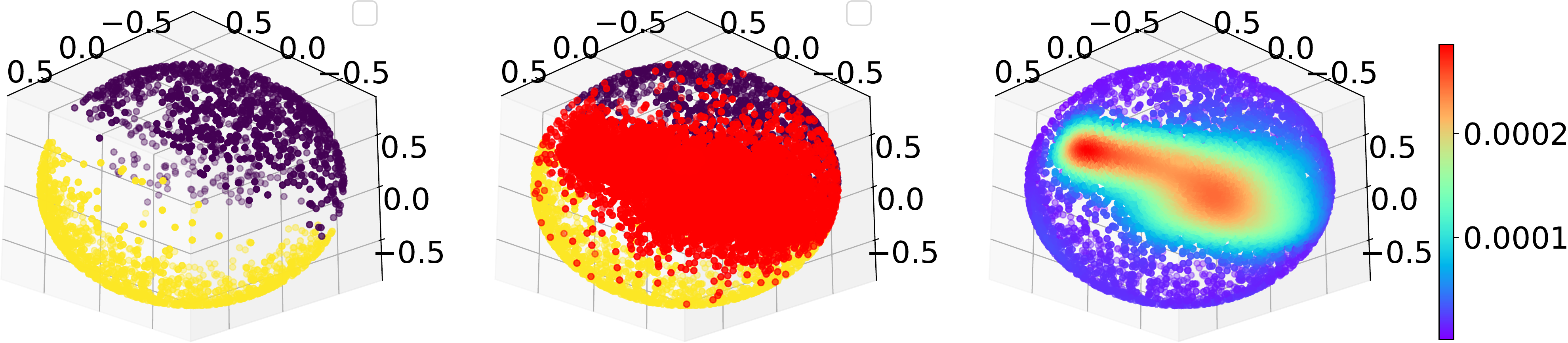}
\caption{Compact $S^2$ latent space of VAE trained only on the two digits of the MNIST dataset 0's and 1's. The outliers densely land on the hole.
\textbf{From left to right}: yellow depicts means of the estimated posteriors for 1's and purple for 0's; red represents the mapped means for a held-out outlier: a class of digits 9's; kernel density estimation of all means with the densest region in the hole packed with the outliers.}
\label{fig:HeldOutMNISTOnSphere}
\end{figure*}

\subsection{Latent holes}

We look at the holes from two different viewpoints as mentioned in section~\ref{latent_holes_duality}. First, we apply the following operational perspective to the definition of the hole: if two closely located latent points produce two distant samples in the input space, then we say that there is a hole in the latent space. This definition is similar to the one introduced by~\cite{falorsi2018}. Second, from the conceptual perspective, we treat the holes as the regions where there is a discrepancy between the aggregated posterior and the prior~\citep{xu2020}, i.e., the hole appears when the regions with the high prior density have a low density of the aggregated posterior. Despite the seemingly different motivations for both definitions, it has been demonstrated by~\cite{ruizhe2021} that they are, in fact, connected. Moreover, if it is possible \emph{to squeeze} all of the model's inputs within the high-density region of the prior, then the only ``free'' space within the latent compact turns out to be these holes.

\subsection{Why squeezing?}

The reason for that is at least two-fold. First, because of the arbitrariness of the mapping of outliers, it appears only logical to limit the whole image for any input (including outliers) within the same constrained space as for the inliers in order to eliminate this arbitrariness. The opposite approach, i.e., the widening of the compact, will not provide any benefits, only allowing for the model to use more ``free'' space where the outliers can be mapped to. Also, considering the well-known overconfidence issue~\cite{nalisnick2018deep}, the wide compact does not guarantee the usage by the model of this available free space for any input. Some of the inputs can indeed be placed in the available space far from the mode; however, some will still be placed close to the mode (see Figure~\ref{fig:lInfNormsNoLipschitzMNIST}). Second, recall that VAEs, beside being probabilistic models, are also autoencoders. So they can be viewed from the perspective of the information bottleneck principle, i.e., when the information is put under pressure using the low-dimensional bottleneck layer to extract the relevant factors of variations of the input data in question. The compactness can be considered as a supplementary constraint to the low latent dimensionality (note that the dimensionality is also a topological property). By low dimensionality, we mean in comparison with the dimensionality of the input. Hence, by putting additional pressure in the form of a tight condensing of the mapped image within the predefined limits of the compact, we force the model to learn the bounded factors of variations for \emph{any} input in a controlled and principled manner, eliminating the unnecessary ``free'' space for the model where it can potentially place outliers. The experimental evidence reveals that in such case the model indeed tends to place the outliers within the only available ``free'' space---the latent holes which, in turn, can be easily detected.

\subsection{Scores}

As we indicated before, currently available scores for the holes' detection are based either on the availability of the suitable metric in the input space~\citep{falorsi2018} or on the computationally expensive estimation of the aggregated posterior based on all the training samples~\cite{xu2020}. The motivation for that was clearly because these scores are based on the inlier inputs; hence the search for the holes starts from their corresponding latent codes. However, in the case of outlier detection, we can directly check if the mapped input lands within the hole. For this purpose, we sample the approximated posterior $q_{\boldsymbol{\phi}}(\rvz|\rvx)$ with several latent codes $\rvz$ under a particular input $\rvx$ and compute the sample standard deviation of the log-likelihoods $\log p(\rvx|\rvz)$ (see Appendix G). 

The approximated posterior under the input provides a locality within the latent space. Based on this locality---the samples from the posterior give us the notion of \emph{nearness} around this specific locality. Finally, the standard deviation of the log-likelihoods based on the samples indicates \emph{how far} from each other the sampled codes are mapped back into the input space. As a result, it becomes a beneficial indicator because it does not require making a particular traversal along some path (as was the case in~\citep{falorsi2018}) or doing a thorough search through the latent space for all available holes (as was done in~\citep{ruizhe2021}). On the contrary, it allows direct checking if we are within the hole or not for a particular input.

There is also an alternative but still connected way of scoring the presence of a hole. Recall that density calculation of the given input under probability models with latent variables can be done through marginal likelihood. It is defined as the expected model likelihood marginalized over the latent's prior:
\begin{equation}
p(\rvx) = \E_{p(\rvz)} [ p(\rvx|\rvz) ]
\end{equation}

First, let $\rvz \in \sL$ and $|\sL| < \infty$ then the marginal likelihood can be considered as a finite mixture of different $p(\rvx|\rvz)$ with different constant weights $w = p(\rvz)$ s.t. $\sum_{i=1}^{|\sL|}w_i = 1$: 
\begin{equation}
\label{eq:finite_marginal_likelihood}
p(\rvx) = \sum_{i=1}^{|\sL|} w_i p(\rvx|\rvz_i)
\end{equation}

And $|\sL|$ is the size of the components in the considered finite mixture of the likelihoods. Now suppose that $p(\rvx|\rvz)$ is fully factorized, then the variance of the mixture of individual random components $\ervx$'s comprising $\rvx$ is given by:

\begin{equation}
\begin{aligned}
\label{eq:variance_finite_mixture}
\operatorname{Var}_{p(\rvx)}(\ervx)=\underbrace{\sum_{i=1}^{|\sL|} w_{i} \Var_{p(\rvx|\rvz_{i})}[\ervx]}_{\text{Weighed individual variances}} + \\
+\underbrace{\sum_{i=1}^{|\sL|} w_{i}\left(\E_{p(\rvx|\rvz_{i})}[\ervx]\right)^{2}-\left(\sum_{i=1}^{|\sL|} w_{i} \E_{p(\rvx|\rvz_{i})}[\ervx]\right)^{2}}_{\text{Jensen's gap}}
\end{aligned}
\end{equation}

The first term is a weighted sum of variances of individual model likelihoods under all latent codes. Note that the difference of second and third terms is always non-negative due to Jensen's inequality. This difference represents a \emph{Jensens's gap} and can be interpreted as the variance of the means of the likelihoods weighted by the appropriate prior probabilities of the latent. Hence, by computing the variance of the marginal likelihood under importance sampling due to this Jensen's gap, it is possible to estimate the variance of the means of the likelihoods, which can be utilized for hole detection with outlier inputs. For that reason, we apply the sample standard deviation of the estimated marginal likelihoods under importance sampling (see Appendix G) and test the performance of this score in our thorough experiments. Since the marginal likelihood is already quite frequently estimated under importance sampling in many practical implementations, it becomes possible to quickly adapt these implementations for practitioners to incorporate the sample standard deviation of the marginal likelihood under importance sampling to get as a handy byproduct an alternative score for the hole identification. To distinguish between the two scores, we label the first as the hole indicator and the second as the standard deviation of marginal log-likelihoods (Stds of LLs for short).

\medskip \textbf{\textit{Threshold.}} For identifying the best threshold for the scores, we utilize threshold-independent metrics (these metrics are calculated for all possible thresholds) such as the \textit{Area Under the Receiver Operating Characteristic Curve} (AUROC),  the \textit{Area Under Precision-Recall curve} (AUPR), and the \textit{False-Positive Rate at 80\% of True-Positive Rate} (FPR80)~\citep{davis2006relationship}.

\begin{figure*}[h]
\centering
\begin{subfigure}[b]{0.5\columnwidth}
\includegraphics[width=\textwidth]{./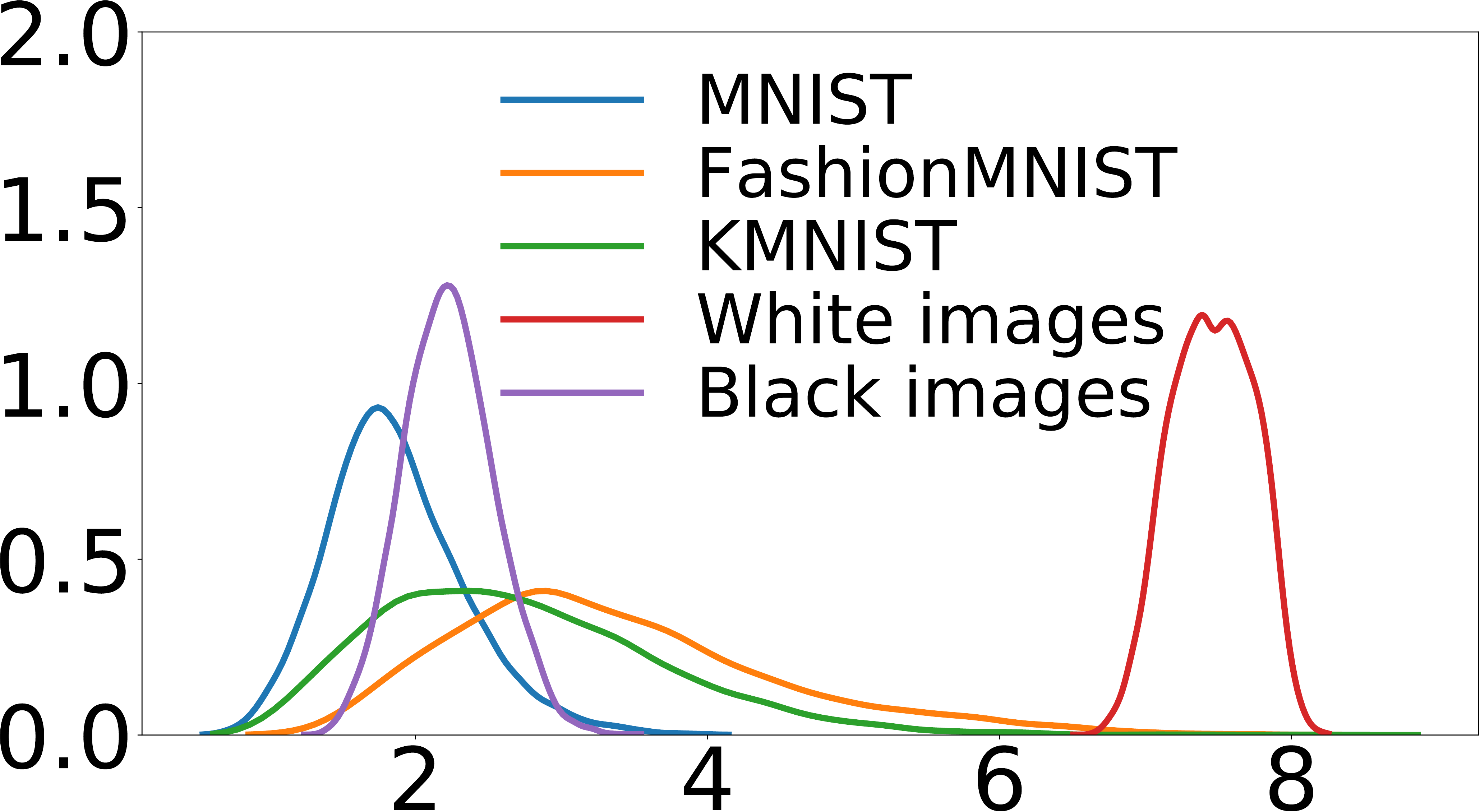}
\caption{Vanilla VAE (MNIST)}
\end{subfigure}
\begin{subfigure}[b]{0.471\columnwidth}
\includegraphics[width=\textwidth]{./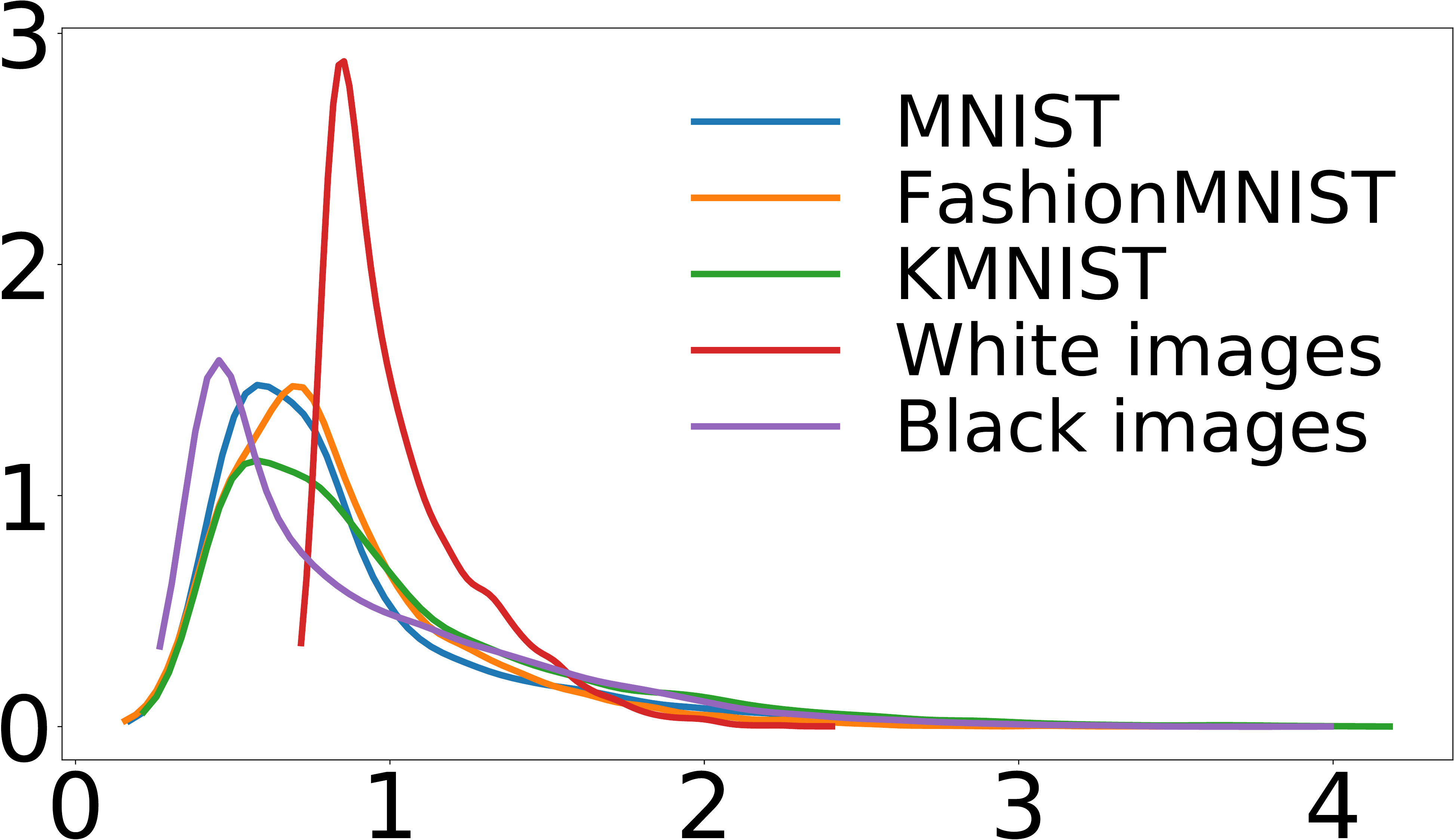}
\caption{Lipshitz VAE (MNIST)}
\end{subfigure}
\begin{subfigure}[b]{0.5\columnwidth}
\includegraphics[width=\textwidth]{./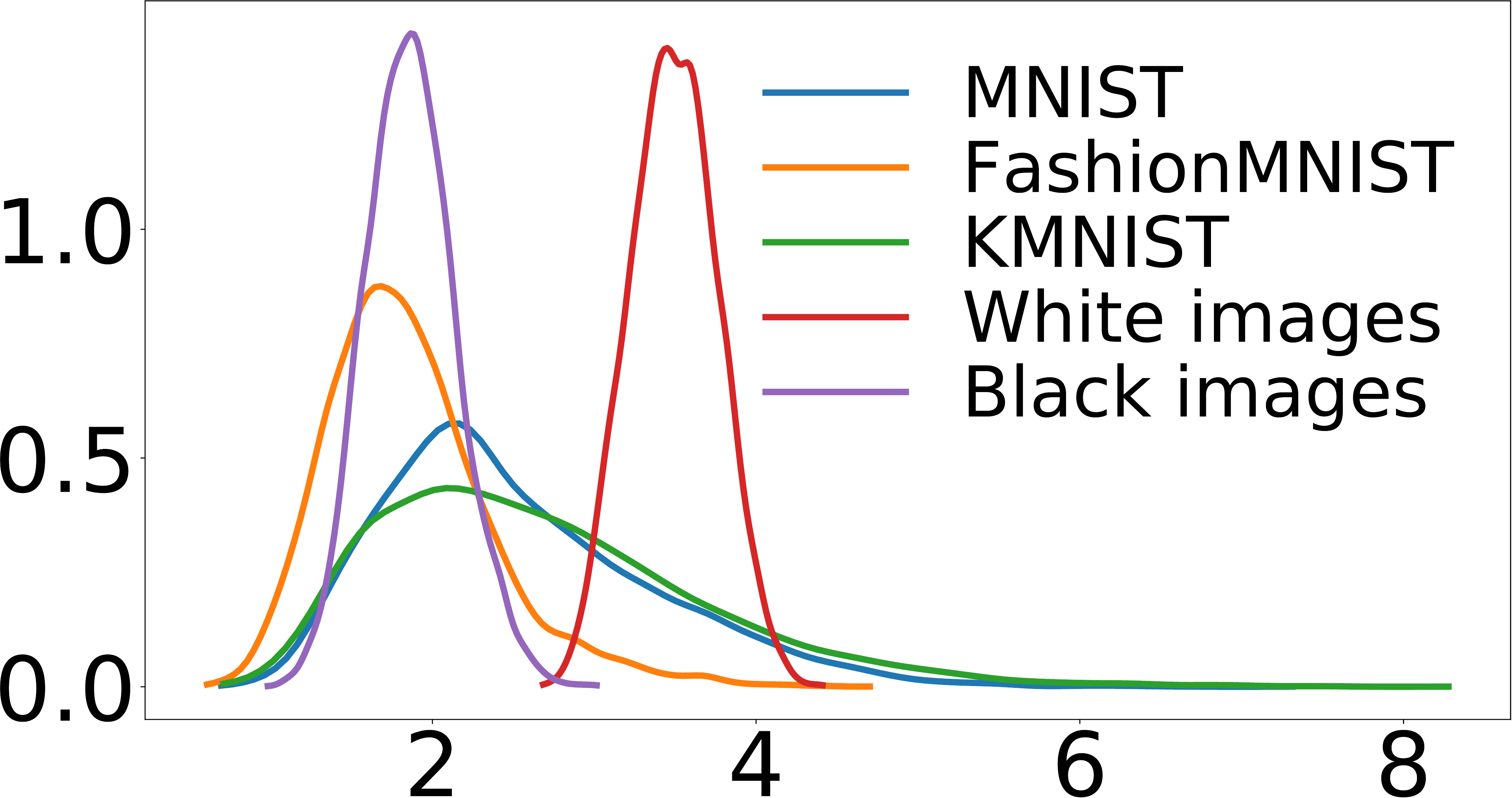}
\caption{Vanilla VAE (FMNIST)}
\end{subfigure}
\begin{subfigure}[b]{0.5\columnwidth}
\includegraphics[width=\textwidth]{./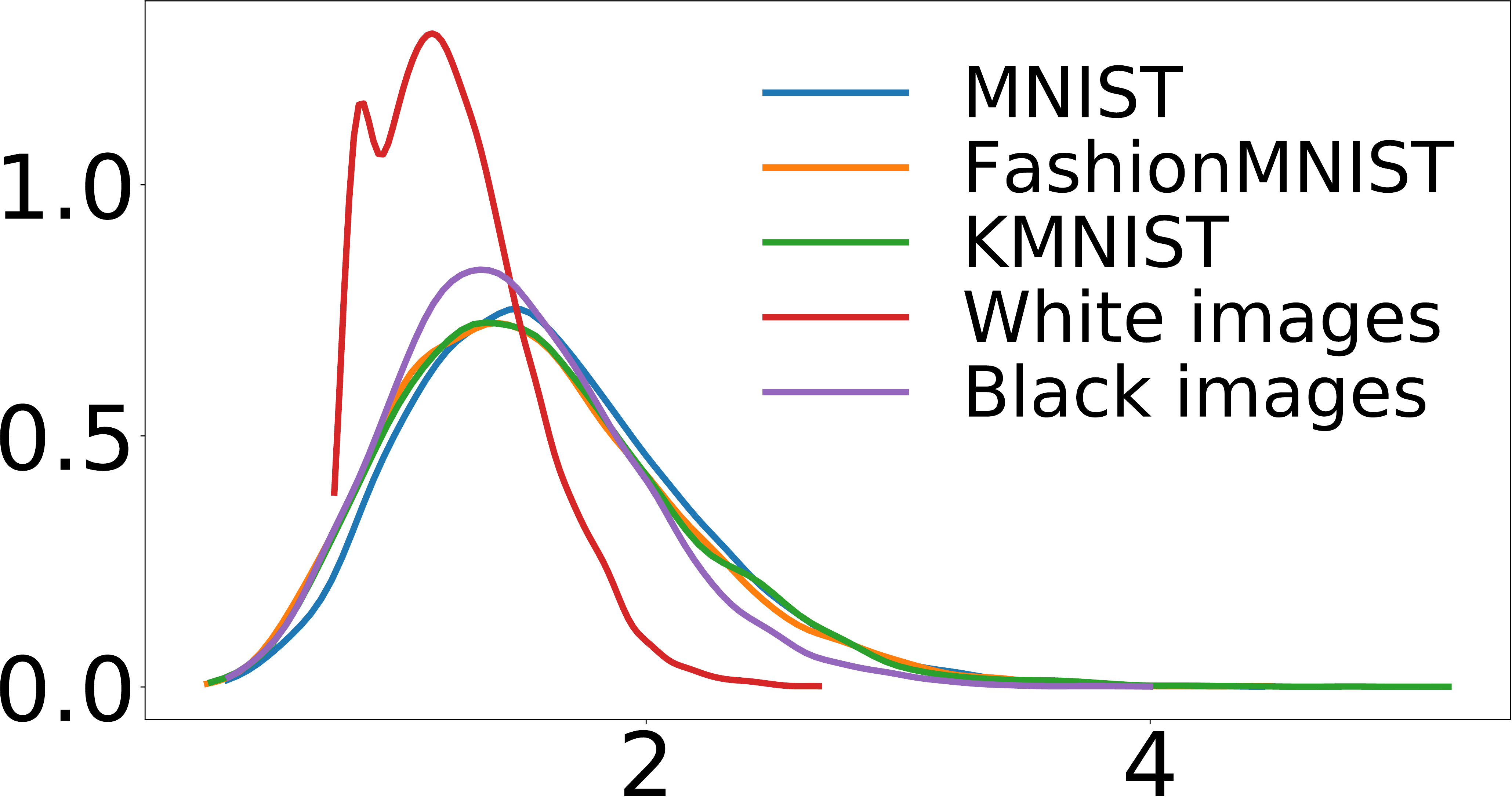 }
\caption{Lipschitz VAE (FMNIST)}
\end{subfigure}
\caption{Estimated Gaussian kernel densities of $L_\infty$-norms of the approximated posterior means in the latent space for datapoints from MNIST as inliers, and datapoints from FashionMNIST, KMNIST, white and black images as outliers. \textbf{From left to right}: classical VAE trained on MNIST; VAE with a fixed Lipschitz constant $M = 1$ for encoder trained on MNIST; classical VAE trained on Fashion-MNIST; VAE with a fixed Lipschitz constant $M = 1$ for encoder trained on Fashion-MNIST.}
\label{fig:lInfNormsNoLipschitzMNIST}
\end{figure*}

\section{Evaluation}

\subsection{Toy Experiments with Compact $\mathcal{S}^2$}

We begin with the simple held-out experiments based on the MNIST dataset~\citep{mnist}. For that reason, we utilize HVAE.~\footnote{The source code of the implemented solution is freely available at \url{https://github.com/DigitalDigger/VAEOutliersDetectionByVacantHoles}} It is trained with the hyperspherical uniform prior on $\mathcal{S}^2$ only on two digits as inliers, namely zeros and ones. The rest of the handwritten digits are considered outliers. These experiments assist in acquiring a fundamental intuition in the way how the encoder of the model maps the outliers in the compact latent space. As it can be observed in Figure~\ref{fig:HeldOutMNISTOnSphere}, the two inlier classes are separated from each other on the sphere surface. There is also a hole between the clusters formed by these classes. Next, we try to map to the latent space held-out classes. As a result, we visually demonstrate that the encoder is forced to place the unseen during training classes somewhere within the constrained space and choose to land the outliers into latent holes. It happens when the model is confronted with the bounded factors of variation. In addition, we run experiments with our hole detection score $\Sigma_{\rvz}[\rvx]$ first with all held-out classes as outliers and second with all classes of Fashion-MNIST~\citep{xiao2017} as outliers. In addition, we conduct the experiments for 10 separate runs and summarize the results in a $99.9\%$ confidence interval values that can be observed in Table~\ref{table:HypersphereMNIST}. The obtained result strongly support our hypothesis about holes as centers of attraction for the outliers. Moreover, we compare these results with the corresponding baseline scores using Vanilla VAEs with the same low dimensionality of the latent space and also benchmark the hole indicator on the model trained on all classes of Fashion-MNIST vs. all MNIST classes as outliers (see Appendix F).

\begin{table}[t]
\caption{Hole indicator (means and $99.9\%$ confidence interval values for 10 separate runs) for toy experiments with $\mathcal{S}^2$. The held-out outliers are all digits except 0's and 1's.}
\label{table:HypersphereMNIST}
\resizebox{\columnwidth}{!}{%
\begin{tabular}{lcc}
\toprule
& \textbf{MNIST held-out} & \textbf{MNIST vs. Fashion-MNIST} \\
\cmidrule(lr){2-2}\cmidrule(lr){3-3}
\textbf{ROC AUC$\uparrow$} &  89.05 (±0.25) &  94.54 (±0.09)\\
\textbf{AUPRC$\uparrow$} & 99.38 (±0.02) & 99.01 (±0.02) \\
\textbf{FPR80$\downarrow$} & 16.1 (±0.72) & 5.60 (±0.2)\\
\bottomrule
\end{tabular}
}
\end{table}

\begin{table*}[h]
\centering
\caption{Scoring values for the Lipschitz constrained VAEs trained on MNIST, Fashion-MNIST and CIFAR10}
\label{table:VanillaMNIST}
\resizebox{\textwidth}{!}{
\begin{tabular}{lccccccccc}
\toprule
& \multicolumn{3}{c}{\textbf{MNIST vs. Fashion-MNIST}} & \multicolumn{3}{c}{\textbf{Fashion-MNIST vs. MNIST}} & \multicolumn{3}{c}{\textbf{CIFAR10 vs. SVHN}} \\
\cmidrule(lr){2-4}\cmidrule(lr){5-7}\cmidrule(lr){8-10}
& \textbf{ROC AUC$\uparrow$} & \textbf{AUPRC$\uparrow$} & \textbf{FPR80$\downarrow$} & \textbf{ROC AUC$\uparrow$} & \textbf{AUPRC$\uparrow$} & \textbf{FPR80$\downarrow$}  &
\textbf{ROC AUC$\uparrow$} & \textbf{AUPRC$\uparrow$} & \textbf{FPR80$\downarrow$}
\\
\midrule
& \multicolumn{9}{c}{\textit{\textbf{Vanilla VAE}}} \\
\midrule
\textbf{Log likelihood} &  99.99 &  99.99 &  0.00 & 54.03 & 57.37 & 84.70 & 61.08 & 53.92 & 56.25 \\
\textbf{Input complexity} & 0.00 & 32.91 & 100.00 &  99.17 &  99.24 & \cellcolor[rgb]{0.9,0.9,0.9} 0.00 & \cellcolor[rgb]{0.9,0.9,0.9} 95.87 & \cellcolor[rgb]{0.9,0.9,0.9} 95.36 & \cellcolor[rgb]{0.9,0.9,0.9} 9.09 \\
\textbf{Typicality test} & \cellcolor[rgb]{0.9,0.9,0.9} 100.00 & \cellcolor[rgb]{0.9,0.9,0.9} 100.00 & \cellcolor[rgb]{0.9,0.9,0.9} 0.00 & 53.81 & 50.78 & 70.74 & 59.75 & 64.06 & 80.20 \\
\midrule
& \multicolumn{9}{c}{\textit{\textbf{Bayesian VAE}}} \\
\midrule
\textbf{WAIC} & 99.99 & 99.99 & 0.00 &  59.53 &  59.35 & 71.88 & 61.15 & 54.22 & 57.15 \\
\textbf{Disagreement score} & 98.95 & 99.01 & 0.23 &  96.44  &  97.22 & 1.11 & 81.16 & 84.82  & 38.47 \\
\textbf{Entropy} & 99.42 & 99.47 & 0.02 &  97.97 &  98.43 & 0.19 & 84.76 & 88.21 & 29.31 \\
\midrule
& \multicolumn{9}{c}{\textit{\textbf{Lipschitz VAE}}} \\
\midrule
\textbf{Stds of LLs} & 99.78 & 99.79 & 0.06 & 99.21 & 99.16 & 0.84 & 86.40  & 84.88 & 21.59\\
\textbf{Hole indicator (ours)} & \textbf{99.87} & \textbf{99.87} & \textbf{0.00} & \cellcolor[rgb]{0.9,0.9,0.9} \textbf{99.69} & \cellcolor[rgb]{0.9,0.9,0.9} \textbf{99.65} & \textbf{0.28} & \textbf{91.76} & \textbf{89.58} &  \textbf{12.30}\\
\bottomrule
\end{tabular}
}
{\scriptsize The most robust scores are in bold. The highest values are in gray.\\
\textbf{*} 0's in FPR80 are possible since it is a value for false-positive rate at 80\% of true-positive rate}
\end{table*}

\subsection{Exploring Compactness Enforced by Lipschitz Continuity}

We continue probing compactness properties based on the constrained Lipschitz mapping to the latent space. We run experiments with both the classical VAE models and the VAE models with the enforced Lipschitz constant $M=1$ for the encoder. We trained four separate models (for the used DNN architectures, see Appendix D): on MNIST and Fashion-MNIST, with and without continuity constraints---the dimensionality of the latent space across all models: $m=10$. We evaluate the means of the approximated posteriors for the outliers from KMNIST~\citep{tarin} (and analogously from Fashion-MNIST for the models trained on MNIST and vice versa). In addition, we run the same tests with the specially forged datasets. One contains non-realistic images, but all of their pixels tend to the black color; another contains images that tend to the white color. The idea behind the two latter datasets is that they represent extreme values of the compact support of the input data. As shown in Figure~\ref{fig:lInfNormsNoLipschitzMNIST}, the possible range of the values achievable by the classical VAE is considerably broad based on the limited number of the outlier datasets. For the model trained on MNIST, it goes as far as seven standard deviations from the mean.

Meanwhile, the unconstrained model trained on Fashion-MNIST has a range with a maximum of around four standard deviations. It demonstrates the arbitrariness of the mapped compact and its limits. Note, however, that when we bound the continuity of the encoder, then both inliers and outliers are squeezed together in a compact within the appropriate limits, which experimentally confirms our theoretical result (see~\cref{thm:inDistsSuppVsOutDistsSupp}). It allows the enforcement of a controlled and bounded compactness on the flat prior.


\subsection{Detecting Outliers}

As we indicated before, due to the surface collapse of the sphere, it is infeasible to use HVAE with high-dimensional priors. Hence, we apply the fixed Lipschitz mapping together with the appropriate input normalization (all inputs are normalized to $[0,1]^n$). We evaluate our approach against several baseline methods. For them, we choose the classical VAE, the ensemble-based VAE, namely, the one based on the Bayesian inference over the weights of the DNN, and several approaches based on the new scores, namely, typicality score and input complexity. For scoring the Bayesian VAE, we utilize three available scores: WAIC, a disagreement score, and entropy. Bayesian inference is implemented utilizing the Bayes by Backpropagation~\cite{blundell2015weight}. The corresponding hyper-parameters and the training protocol are based on the work by~\cite{glazunov2022do}. All models trained on MNIST and Fashion-MNIST have the dimensionality of the latent space equal to $10$, and models trained on the CIFAR10 dataset have the latent of $70$ dimensions. For our suggested Lipschitz-based model, we compute the appropriate Lipschitz constant for the decoder based on the dimensionality of the latent space in order to preserve the comparable log-likelihood values of the classical VAE and also to be able to sample the prior in a standard way. For MNIST and Fashion-MNIST, it is equal to $5$, and for CIFAR10, it is equal to $10$. The results can be observed in Table~\ref{table:VanillaMNIST}. Our hole indicator demonstrates the best results among the scores that consistently perform well across all datasets. Moreover, the standard deviation of the likelihoods is the second most robust score, which agrees with our theoretical derivation (see Equation~\ref{eq:variance_finite_mixture}). By robustness in this context, we mean the persistence of the state-of-the-art results, independent of the dataset used for training the model and testing for the outliers. For example, despite the high values for the typicality test on MNIST vs. Fashion-MNIST datasets and input complexity on CIFAR10 vs. SVHN datasets, they are inconsistent across all of the considered datasets, making them unreliable in practical applications. The reason behind our score's robustness is that the model maps the outliers to the holes in the compact latent space (i.e., the only ``free'' space available for the learned factors of variations) that can be easily detected. Other scores rely either on the complexity of the dataset (as input complexity score), which is a data-dependent score, or on the hypothesis about the typical set, which is not always guaranteed because of the arbitrariness of the mapping of the encoder to any available ``free'' space including the holes in the typical set.

\subsection{Ablation Study}

To check if the continuity holes are responsible for the obtained results, we conduct experiments with the gradual reduction of the holes in the latent space. This can be done by smoothing out the decoder mapping. This approach is advantageous since it affects all holes in the latent space. Hence, if our assumption is correct, then the results of the outlier detection based on the holes should degrade according to the strength of the smoothing. We enforce smoothing by setting the corresponding Lipschitz constants on the decoder mapping in the same way as it was done for the encoder in previous experiments. We train six separate models, all of which have the Lipschitz encoder with $M=1$. Decoder is enforced with the values of the Lipschitz constants $M$ from the following set: $\{1, 2, 3, 4, 5, 10\}$. As can be seen in Figure~\ref{fig:decodeSmoothing}, there is an apparent performance degradation of the hole indicator for the outliers with decreasing of the corresponding Lipschitz constant enforced on the decoder, which is in line with our hypothesis that outliers land on the latent holes. Finally, we separately ablated the compactness component (for the results see Appendix H).


\begin{figure}[!h]
\centering
\includegraphics[width=1\columnwidth]{./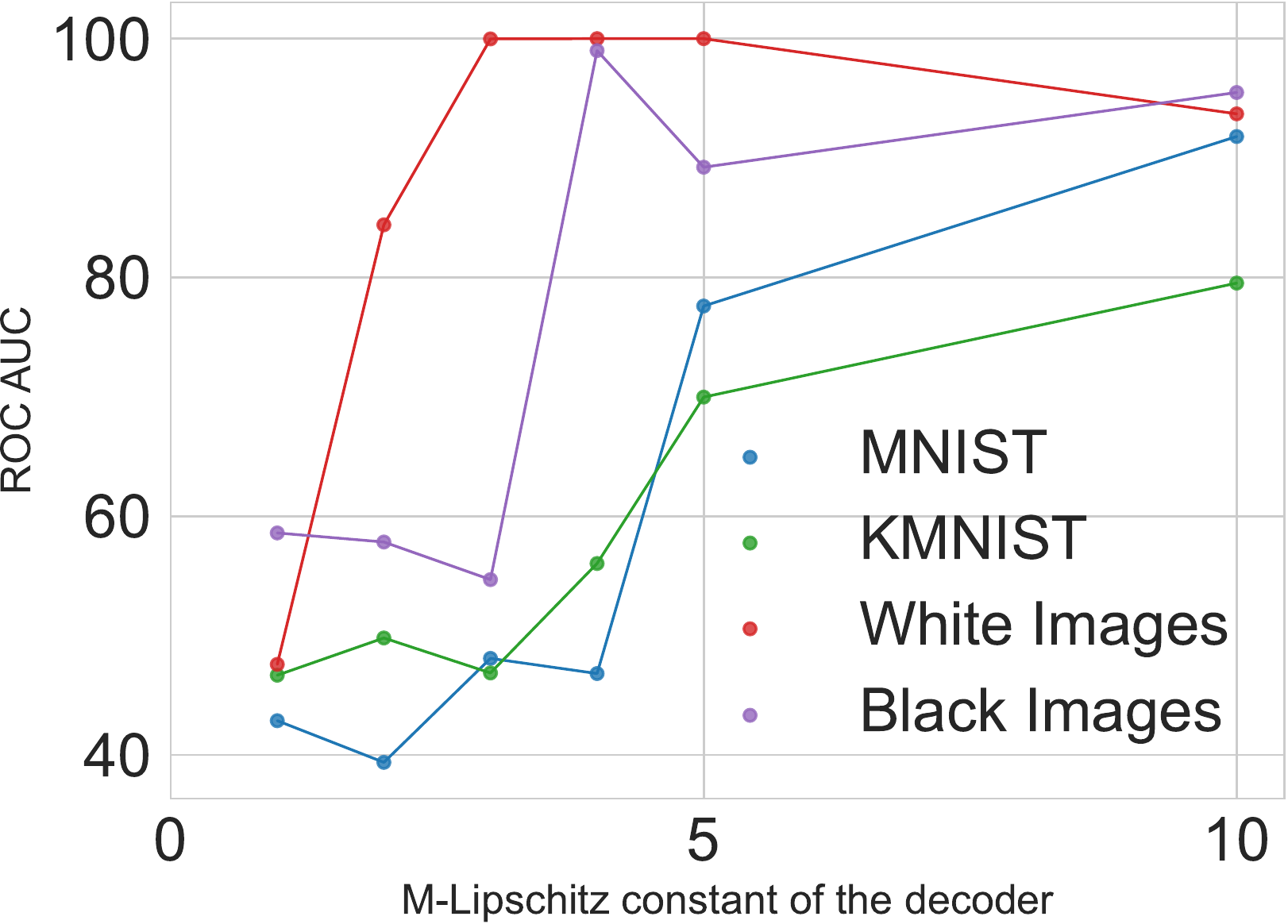}
\caption{ROC AUC values from the ablation study. VAE with different Lipschitz constants enforced on the decoder, namely, $M=1, M=2, M=3, M=4, M=5 \ \textrm{and} \ M=10$, all plotted along $x$-axis. VAE is trained on Fashion-MNIST with the Encoder Lipschitz constant $M = 1$ for all tests and evaluated on several outlier datasets.}
\label{fig:decodeSmoothing}
\end{figure}

\section{Conclusion}

In this paper, we identified an implicit theoretical inconsistency from the perspective of general topology between the VAE modeling and the UAT. We addressed this discrepancy utilizing the compactness of the mapped image to the latent space. In order to enforce the compactness, we devised a provable method for controlling the bounds of the resulting compact. The experimental evidence revealed that constraining the limits of the factors of variation is beneficial for outlier detection. In particular, we discovered that outlier inputs tend to be mapped to the latent continuity holes. By devising a special score for the hole indicator, we conducted several experiments aimed at their detection. Utilizing this score, we achieved promising results in unsupervised outlier detection based on the latent representation. Specifically, the suggested method and score demonstrated the most robust performance across all the used benchmarks and datasets.

\begin{acknowledgements}
This project has been partially funded from the European Union's research and innovation programmes under grant agreements No. 883275 (HEIR) and No. 101092912 (\hbox{MLSysOps}).
\end{acknowledgements}

\bibliography{paper}

\appendix

\section{Preservation of Compactness Under Continuous Mapping}

$\textbf{Lemma:}$ Let $f:\mathcal{X} \to \mathcal{Y}$ be a continuous mapping from a topological space $\mathcal{X}$ to a topological space $\mathcal{Y}$. If $\mathcal{X}$ is compact then its image $f[\mathcal{X}]$ is also compact.

$\textbf{Proof:}$\footnote{Adapted from: \url{http://mathonline.wikidot.com/preservation-of-compactness-under-continuous-maps}} Let $\mathcal C = \{ U_i \}_{i \in I}$ be any open covering of $f[\mathcal{X}]$ in $\mathcal{Y}$. Then: $f[\mathcal{X}] \subseteq \bigcup_{i \in I} U_i $


Now let us take the inverse of both its sizes:

\begin{align} 
\mathcal{X} \subseteq f^{-1} \left ( \bigcup_{i \in I} U_i \right ) \\  \mathcal{X} \subseteq\bigcup_{i \in I} f^{-1}(U_i)
\end{align}
Since $f$ is continuous and $U_i$ is open in $\mathcal{Y}$ for all $i \in I$ we have that $f^{-1}(U_i)$ is open in $\mathcal{X}$ for all $i \in I$. From above, we see that then $\{ f^{-1}(U_i) \}_{i \in I}$ is an open cover of $\mathcal{X}$. Since $\mathcal{X}$ is compact, this open cover has a finite subcover, say $\{ f^{-1}(U_{i_1}), f^{-1}(U_{i_2}), ..., f^{-1}(U_{i_n}) \}$ where $i_n \in I$ where:
\begin{align} \quad \mathcal{X} \subseteq \bigcup_{k=1}^{n} f^{-1}(U_{i_k}) \end{align}

Taking the image of both sides above and we have that:
\begin{align} \quad f[\mathcal{X}] \subseteq f \left ( \bigcup_{k=1}^{n} f^{-1}(U_{i_k}) \right ) \\ \quad f[\mathcal{X}] \subseteq\bigcup_{k=1}^{n} f(f^{-1}(U_{i_k})) \\ \quad f[\mathcal{X}] \subseteq \bigcup_{k=1}^{n} U_{i_k}
\end{align}

Thus $\mathcal C^* = \{ U_{i_1}, U_{i_2}, ..., U_{i_n} \}$ is a finite subcover of $\mathcal C$. Hence $f[\mathcal{X}]$ is compact in $\mathcal{Y} \hfill \qedsymbol$

\begin{figure}[ht]
  \centering
  \includegraphics[width=\columnwidth]{./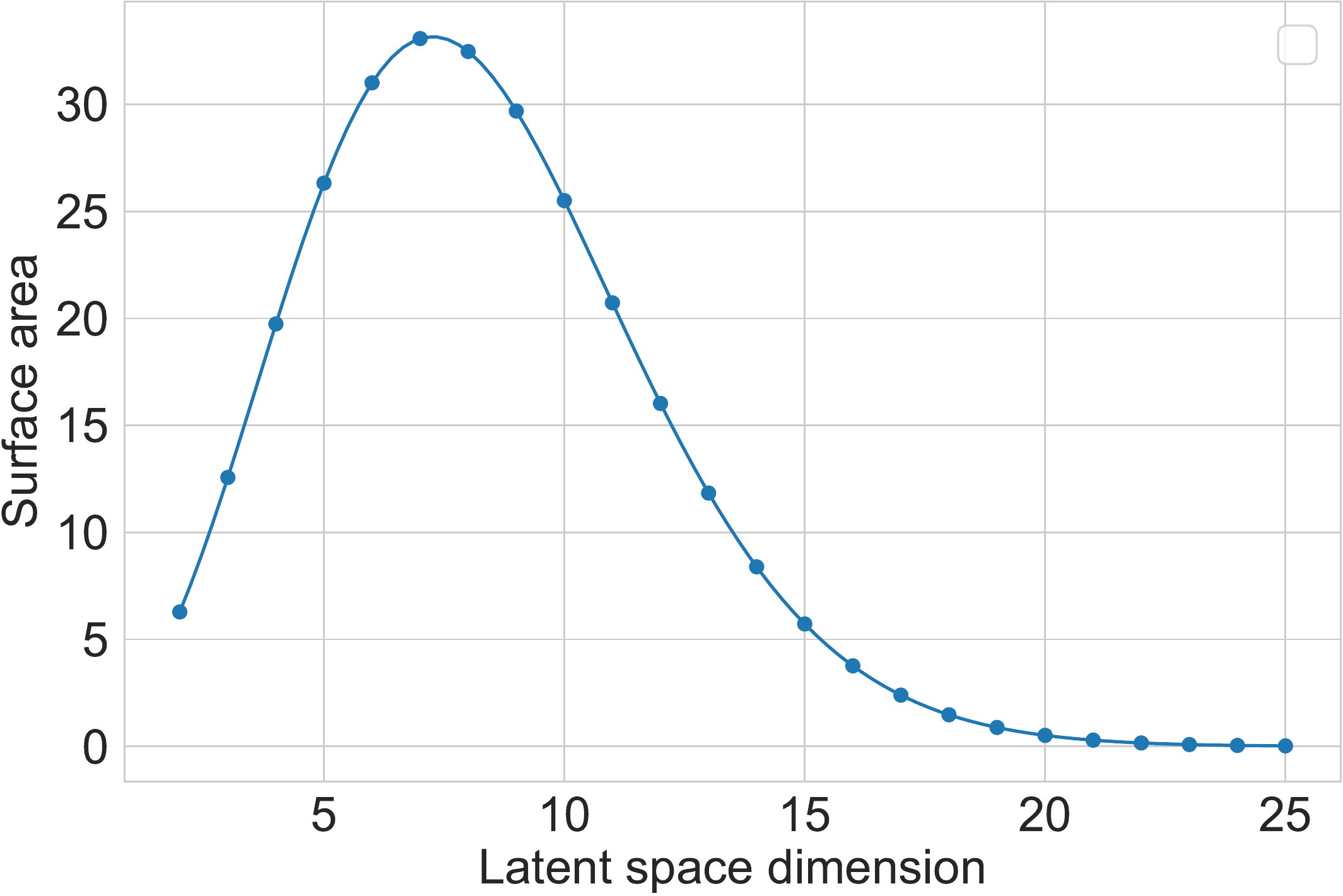 }
  \caption{The problem of surface area collapse.}
  \label{fig:surfaceAreaCollapse}
\end{figure}
\section{Sphere is compact}
$\textbf{Lemma:}$ Let $\mathcal{S}^{n} := \{ \mathbf{x} \in \reals^{n+1} : ||\mathbf{x}|| = 1 \}$ be a hypersphere with radius $r=1$ centered at $\mathbf{0}$ and embedded in $\reals^{n+1}$ then $\mathcal{S}^{n}$ is compact

$\textbf{Proof:}$ First note that $\mathcal{S}^n$ is obviously bounded. Next, observe that $||\rvx|| = \sum \ervx^2$ which represents a continuous mapping whose inverse is a closed set: $\{ 1 \}$, therefore the $\mathcal{S}^n$ is closed. It follows that the $\mathcal{S}^n$ is both closed and bounded, hence by Heine-Borel theorem it is compact.

\section{Surface Area Collapse of the Sphere}

As can be observed from the Figure~\ref{fig:surfaceAreaCollapse} the surface area grows up to approximately seven dimensions and after that it goes down completely collapsing in cases with greater than twenty dimensions. This issue makes it infeasible to use compact hyperspherical latent space in high-dimensional configurations.

\section{DNN Architectures Used}

For MNIST and FashionMNIST datasets with a single channel we used the following architectures for baseline experiments.

\begin{table}[!h]
\centering
\caption{Encoder CNN for MNIST and FashionMNIST}
\label{table:FMNISTEncoderCNN}
\begin{tabular}{lccc}
\toprule
\textbf{Operation} &  \textbf{Kernel} & \textbf{Strides} & \textbf{Feature Maps}  \\
\midrule
Convolution &  3 x 3 & 1 x 1 &  32\\
Convolution & 3 x 3 & 1 x 1 & 16 \\
Max pooling 2D & 2 x 2 & 2 x 2 & \textemdash \\
Linear for $\boldsymbol{\mu}$ & \textemdash & \textemdash & 10 \\
Linear for $\log \boldsymbol{\sigma}$ & \textemdash & \textemdash & 10 \\
\bottomrule
\end{tabular}
\end{table}

\begin{table}[!h]
\centering
\caption{Decoder CNN for MNIST and FashionMNIST}
\label{table:FMNISTDecoderCNN}
\resizebox{\columnwidth}{!}{
\begin{tabular}{lccc}
\toprule
\textbf{Operation} &  \textbf{Kernel} & \textbf{Strides} &\textbf{ Feature Maps}  \\
\midrule
Linear for sampled $\mathbf{z}$ & \textemdash & \textemdash & 2306 \\
Upsampling nearest 2D & \textemdash & \textemdash & \textemdash \\
Max pooling 2D & 2 x 2 & 2 x 2 & \textemdash \\
Transposed Convolution &  3 x 3 & 1 x 1 &  32\\
Transposed Convolution & 3 x 3 & 1 x 1 & 1\\
\bottomrule
\end{tabular}
}
\end{table}

For CIFAR10 dataset with three channels we used the following architectures with additional padding = 1 and no bias for every convolutional layer. Latent dimensionality = 70.

\begin{table}[!h]
\centering
\caption{Encoder CNN for SVHN and CIFAR10}
\label{table:SVHNEncoderCNN}
\begin{tabular}{lccc}
\toprule
\textbf{Operation} &  \textbf{Kernel} & \textbf{Strides} & \textbf{Feature Maps}  \\
\midrule
Convolution &  3 x 3 & 1 x 1 &  16\\
Convolution & 3 x 3 & 2 x 2 & 32  \\
Convolution & 3 x 3 & 1 x 1 & 32  \\
Convolution & 3 x 3 & 2 x 2 & 16  \\
Linear & \textemdash & \textemdash & 512 \\
Linear for $\boldsymbol{\mu}$ & \textemdash & \textemdash & 70 \\
Linear for $\log \boldsymbol{\sigma}$ & \textemdash & \textemdash & 70  \\
\bottomrule
\end{tabular}
\end{table}

\begin{table}[!h]
\centering
\caption{Decoder CNN for SVNH and CIFAR10}
\label{table:SVHNDecoderCNN}
\resizebox{\columnwidth}{!}{
\begin{tabular}{lccc}
\toprule
\textbf{Operation} &  \textbf{Kernel} & \textbf{Strides} & \textbf{Feature Maps}  \\
\midrule
Linear for sampled $\mathbf{z}$ &  \textemdash & \textemdash &  512\\
Linear & \textemdash & \textemdash &  1024\\
Transposed Convolution & 3 x 3 & 2 x 2 & 32  \\
Transposed Convolution & 3 x 3 & 1 x 1 & 32  \\
Transposed Convolution & 3 x 3 & 2 x 2 & 16  \\
Transposed Convolution & 3 x 3 & 1 x 1 & 3  \\
\bottomrule
\end{tabular}
}
\end{table}

For all architectures we used ReLU as a non-linearity in case of classical VAE. For Lipschitz encoder we used GroupSort. In addition, all pixels of the images have been normalized to [0,1] range for each channel for both training and testing phases. For HVAE we used the same architectures as in the original implementation~\footnote{We used the official implementation available at \url{https://github.com/nicola-decao/s-vae-pytorch}}, i.e., two hidden linear layers for the encoder with the dimensionality 256 and 128 correspondingly, and two hidden linear layers for the decoder with dimensionality 128 and 256. For Lipschitz VAE we also used two hidden linear layers for both encoder and decoder with doubled dimensionality for each corresponding hidden layer.

\section{Forward Pass of the Lipschitz Constant Enforcing}

\begin{algorithm}[!h]
\SetAlgoLined
\DontPrintSemicolon
\SetKwInOut{Input}{Input}
\SetKwInOut{Output}{Output}
\SetKwInOut{Result}{Result}
\SetKwInOut{Requires}{Requires}
\SetKwFunction{LInfBallProj}{LInfBallProjection}
\SetKwBlock{Forward}{Forward~pass}{Forward~pass}
\SetKwBlock{ProjectBlock}{LInfBallProjection}{LInfBallProjection}

\Input{Data point $\mathbf{x}$}
\Result{Network output $\mathbf{h}_L$}
\Requires{Lipschitz constant $M$}

\BlankLine
\SetKwFunction{GroupSort}{GroupSort}

\ProjectBlock{
    \Input{$\mathbf{y} \in \mathbb{R}^N$}
    \Output{$\mathbf{x} \in \mathbb{R}^N$}

    Sort $\mathbf{y}$ into $\mathbf{u}$: $u_1 \geq \ldots \geq u_N $\;
    
    Set $K := \max_{1 \leq k \leq N}\{k | (\sum_{r=1}^{k} u_r - 1) / k < u_k \}$\;
    
    Set $\tau := (\sum_{k}^{K}u_k - 1)/K$\;
    
    \For{$n=1$ \KwTo $N$}{
        Set $x_n := \max(y_n - \tau, 0)$\;
    }
}

\BlankLine
\Forward{
    $\mathbf{h}_0 \leftarrow \mathbf{x}$\;
    
    \For{$l=1$ \KwTo $L$}{
        $\mathbf{W}_l \leftarrow \LInfBallProj(\mathbf{W}_l)$\;
        
        pre-activation $\leftarrow M^{\frac{1}{L}} \mathbf{W}_l\mathbf{h}_{l-1}$\;
        
        $\mathbf{h}_l \leftarrow \GroupSort(\text{pre-activation})$\;
    }
}

\caption{Ensuring Lipschitz constant in a DNN mapping}
\label{alg: training_forward_pass}
\end{algorithm}

\section{Further Experiments with Hyperspherical VAE}

\begin{table*}[h]
\centering
\caption{Scoring values (means and $99.9\%$ confidence interval) for toy experiments with $\mathcal{S}^2$ for MNIST vs. held-out and Fashion-MNIST. The held-out outliers are all digits except 0's and 1's. And with $\mathcal{S}^3$ for Fashion-MNIST vs. MNIST. Note that Vanilla VAEs in the experiments are equipped with the same low dimensional latent space as the surface of the corresponding $\mathcal{S}$-VAE.} 
\label{table:ToyExperiments}
\resizebox{\textwidth}{!}{
\begin{tabular}{lccccccccc}
\toprule
& \multicolumn{3}{c}{\textbf{MNIST held-out}} & \multicolumn{3}{c}{\textbf{MNIST vs. Fashion-MNIST}} & \multicolumn{3}{c}{\textbf{Fashion-MNIST vs. MNIST}} \\
\cmidrule(lr){2-4}\cmidrule(lr){5-7}\cmidrule(lr){8-10}
& \textbf{ROC AUC$\uparrow$} & \textbf{AUPRC$\uparrow$} & \textbf{FPR80$\downarrow$} & \textbf{ROC AUC$\uparrow$} & \textbf{AUPRC$\uparrow$} & \textbf{FPR80$\downarrow$}  &
\textbf{ROC AUC$\uparrow$} & \textbf{AUPRC$\uparrow$} & \textbf{FPR80$\downarrow$}
\\
\midrule
& \multicolumn{9}{c}{\textit{\textbf{Vanilla VAE}}} \\
\midrule
\textbf{Log likelihood} &  96.84 (±0.07) &  98.50 (±0.04) &  4.43 (±0.27) & 99.85 (±0.02) & 99.86 (±0.01) & 0.00 (±0) & 45.13 (±0.1) & 43.75 (±0.05) & 75.60 (±0.27) \\
\textbf{Input complexity} & 42.98 (±0.86) & 45.28 (±0.52) & 81.82 (±0) &  18.27 (±2.12) &  37.18 (±0.8) & 100 (±0) & \cellcolor[rgb]{0.9,0.9,0.9} 94.96 (±1.18) & \cellcolor[rgb]{0.9,0.9,0.9} 95.57 (±1.12) & \cellcolor[rgb]{0.9,0.9,0.9} 10.91 (±5.68) \\
\textbf{Typicality test} & \cellcolor[rgb]{0.9,0.9,0.9} 96.84 (±0.05) & 98.50 (±0.04) & \cellcolor[rgb]{0.9,0.9,0.9} 4.24 (±0.25) & \cellcolor[rgb]{0.9,0.9,0.9} 99.86 (±0.01) & \cellcolor[rgb]{0.9,0.9,0.9} 99.87 (±0.01) & \cellcolor[rgb]{0.9,0.9,0.9} 0.00 (±0) & 45.16 (±0.1) & 43.76 (±0.06) & 75.60 (±0.35) \\
\midrule
& \multicolumn{9}{c}{\textit{\textbf{$\mathcal{S}$-VAE}}} \\
\midrule
\textbf{Log likelihood} &  97.07 (±0.05) &  98.62 (±0.06) &  4.34 (±0.24) & 99.85 (±0.02) & 99.87 (±0.01)  & 0.01 (±0.01) & 45.25 (±0.07) & 44.45 (±0.05) & 76.21 (±0.26) \\
\textbf{Input complexity} & 41.74 (±1.11) & 44.67 (±0.44) & 80.00 (±5.68) &  17.54 (±2.45) &  37.02 (±0.83) & 100 (±0) & \cellcolor[rgb]{0.9,0.9,0.9} 94.79 (±1.63) & \cellcolor[rgb]{0.9,0.9,0.9} 95.45 (±1.39) & \cellcolor[rgb]{0.9,0.9,0.9} 12.73 (±7.57) \\
\textbf{Typicality test} & \cellcolor[rgb]{0.9,0.9,0.9} 97.04 (±0.05) & 98.59 (±0.05) & \cellcolor[rgb]{0.9,0.9,0.9} 4.34 (±0.25) & \cellcolor[rgb]{0.9,0.9,0.9} 99.86 (±0.02) & \cellcolor[rgb]{0.9,0.9,0.9} 99.87 (±0.02)  & \cellcolor[rgb]{0.9,0.9,0.9} 0.00 (±0) & 45.25 (±0.08) & 44.45 (±0.09) & 76.17 (±0.24) \\
\textbf{Hole indicator (ours)} & \textbf{89.05 (±0.25)} & \cellcolor[rgb]{0.9,0.9,0.9} \textbf{99.38 (±0.02)} & \textbf{16.1 (±0.72)} &  \textbf{94.54 (±0.09)} & \textbf{99.01 (±0.02)} & \textbf{5.60 (±0.2)} & \textbf{87.37 (±0.16)} & \textbf{88.86 (±0.15)} &  \textbf{19.25 (±0.46)}\\
\bottomrule
\end{tabular}
}
{\scriptsize The most robust scores are in bold. The highest values are in gray.\\
\textbf{*} 0's in FPR80 are possible since it is a value for false-positive rate at 80\% of true-positive rate}
\end{table*}

As can be observed in Table~\ref{table:ToyExperiments} the most robust scores are hole indicators that achieve the most consistent results across all used datasets.

\section{Scores}

\subsection{Stds of LLs}

Recall that \emph{importance sampling} is used to estimate the marginal likelihood of the input under the trained VAE, namely:
\begin{footnotesize}
\begin{equation}\label{eq_importance_sampling}
p_{\boldsymbol{\theta}}(\mathbf{x}) \simeq \frac{1}{N} \sum_{i=1}^{N} \frac{p_{\boldsymbol{\theta}}(\rvx, \rvz_{\textmd{ \emph{(i)}}})}{q_{\boldsymbol{\phi}}(\rvz_{\textmd{ \emph{(i)}}} | \rvx)},\quad\textrm{where}\quad\rvz_{\textmd{ \emph{(i)}}} \sim q_{\boldsymbol{\phi}}(\rvz | \rvx)
\end{equation}
\end{footnotesize}

where $\boldsymbol{\phi}$ represents the variational parameters of the encoder responsible for the variational approximation of the posterior $q_{\boldsymbol{\phi}}$ over the latent variable $\mathbf{z}$, and $\boldsymbol{\theta}$ stands fr the generative parameters of the decoder responsible for the parametrization of the likelihood of the input $p_{\boldsymbol{\theta}}(\mathbf{x}|\mathbf{z})$.
Hence, it is possible to compute the sample standard deviation of the marginal likelihood under \emph{importance sampling} by computing the sample standard deviation of the terms within the given sum. This constitutes the essence of the Stds of LLs score.

\subsection{Hole Indicator Score}

For this score we sample the approximated posterior $q_{\boldsymbol{\phi}}(\rvz|\rvx)$ with several latent codes $\rvz$ under a particular input $\rvx$ and compute the sample standard deviation of the log-likelihoods $\log p(\rvx|\rvz)$:
\begin{equation}
\Sigma_{\rvz}[\rvx] = \sqrt{\frac{1}{N-1} \sum_{{\rvz}} \left(\log p(\rvx|\rvz) - \overline{\log p(\rvx|\rvz)}\right)^2}\end{equation}

\subsection{Typicality}

The test for typicality treats all input sequences as inliers if their entropy is sufficiently close to the entropy of the model, i.e., if the following holds for small $\epsilon$ then the given input is in-distribution: 
\begin{footnotesize}
\begin{equation}
\begin{split}
\left|-\log p\left({\rvx^*}\right)-\sum_{\rvx \in \mathcal{D}}\log p(\rvx)\right| \leq \epsilon
\end{split}
\end{equation}
\end{footnotesize}
This score is applied to one-element sequences in our work since it is the most realistic scenario in practical applications of outlier detection.

\subsection{Input Complexity}

\begin{figure*}[ht]
\centering
\begin{subfigure}[b]{0.32\textwidth}
\includegraphics[width=\textwidth]{./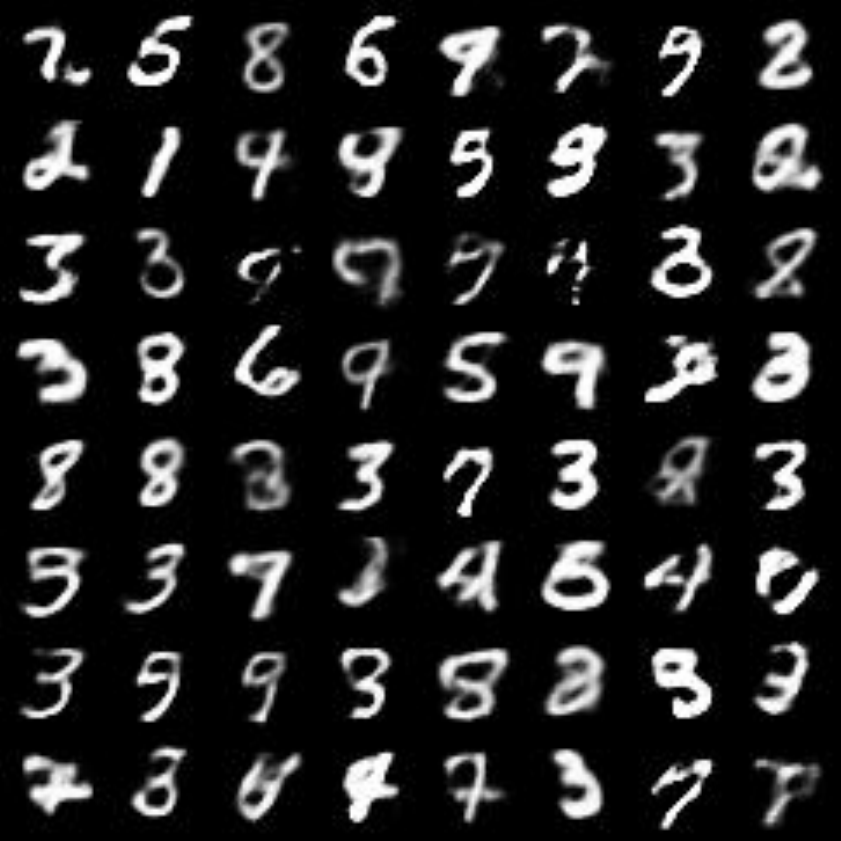}
\end{subfigure}
\hfill
\begin{subfigure}[b]{0.32\textwidth}
\includegraphics[width=\textwidth]{./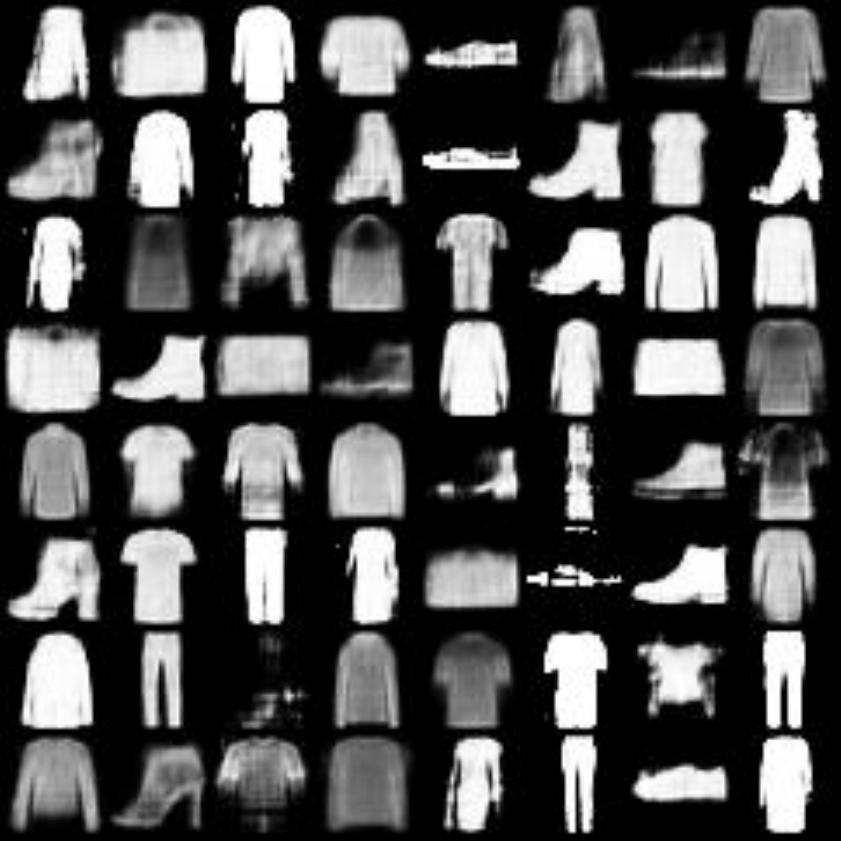}
\end{subfigure}
\hfill
\begin{subfigure}[b]{0.32\textwidth}
\includegraphics[width=\textwidth]{./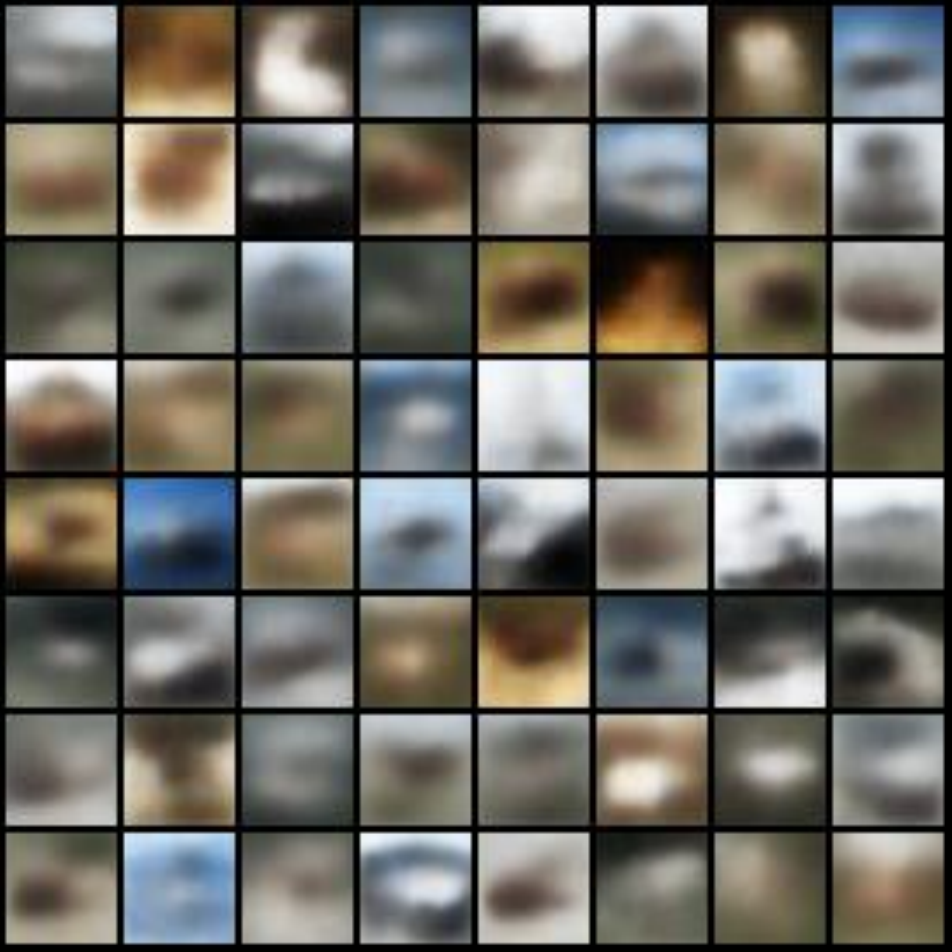}
\end{subfigure}
\caption{Random samples from the Lipshitz VAEs trained on MNIST, Fashion-MNIST, CIFAR-10}
\label{fig:LipschitzVAESamples}
\end{figure*}

First, we compute the complexity estimate $L(\rvx)$ by compressing the input $\rvx$ with JPEG2000. The result represents a string of bits: $C(\rvx)$. After that we apply the normalization of the length of the resulting string by dimensionality $d$:
\begin{equation*}
    L(\rvx) = \frac{\left|C(\rvx)\right|}{d} .
\end{equation*}

Subsequently the input complexity score is calculated in the following way (in bits per dimension):
\begin{equation}
    S(\rvx) = -\log p(\rvx) - L(\rvx)
\end{equation}
The higher the $S$ score, the more indicative it is that the current input is the outlier.

\section{Compactness Ablation}

Since the placement of the outliers within the unconstrained compact space with Vanilla VAEs is basically arbitrary, it can be the case that some outliers will still be successfully detected via hole indicator when these outliers are mapped within the same space as the inliers. Hence, in order to make an appropriate ablation study only for the compactness, we conducted the following experiments. We gradually increase the pixel intensity of the images from black to higher values by multiplying it with a scalar. We calculate the hole indicator for each intensity step for both Lipschitz VAE and Vanilla VAE. The corresponding results can be observed in the Table~\ref{table:CompactnessAblation}.

\begin{table}[h]
\centering
\caption{Ablation of compactness with hole indicator.} 
\label{table:CompactnessAblation}
\resizebox{\columnwidth}{!}{
\begin{tabular}{lcccccccc}
\toprule
& \textbf{1x} & \textbf{3x} & \textbf{5x}  & \textbf{7x} & \textbf{9x} & \textbf{11x} & \textbf{13x} & \textbf{15x} \\
\midrule
\textbf{Vanilla VAE} &  100 &  100 & 100 & 99.69 & 53.40 & 0 & 0 & 0.03\\
\textbf{Lipschitz VAE} & 100 & 100 & 100 & 99.48 &  99.36 & 95.32 & 99.48  & 95.70 \\
\bottomrule
\end{tabular}
}
\end{table}

As can be seen from the obtained values, there is a clear transition from the detectable outliers vs. non-detectable ones through the latent holes in the case of Vanilla VAE, and no degradation of the results in the case with the Lipschitz VAEs.

\section{Samples from Lipschitz VAEs}

Figure~\ref{fig:LipschitzVAESamples} depicts random samples from the Lipschitz VAEs.

\end{document}